\providecommand{\keywords}[1]
{
  \textbf{{Keywords---}} #1
}
\title{Spectral Representation-based Reinforcement Learning}
\author{%
    \normalsize
  \begin{tabular}{c@{\hspace{2em}}c@{\hspace{2em}}c} 
    \textbf{Chenxiao Gao} & \textbf{Haotian Sun} & \textbf{Na Li}\\
    Georgia Tech & Georgia Tech & Harvard University\\
    \texttt{cgao@gatech.edu} & \texttt{hsun409@gatech.edu} & \texttt{nali@seas.harvard.edu}\\
  \end{tabular}
  \vspace{1em}
  \\
  \normalsize
  \begin{tabular}{c@{\hspace{1em}}c} 
    \textbf{Dale Schuurmans} & \textbf{Bo Dai} \\
    Google DeepMind \& University of Alberta & Google DeepMind \& Georgia Tech \\
    \texttt{schuurmans@google.com} & \texttt{bodai@google.com}
  \end{tabular}%
}
\begin{document}
\maketitle

\begin{abstract}
In real-world applications with large state and action spaces, reinforcement learning (RL) typically employs function approximations to represent core components like the policies, value functions, and dynamics models. Although powerful approximations such as neural networks offer great expressiveness, they often present theoretical ambiguities, suffer from optimization instability and exploration difficulty, and incur substantial computational costs in practice. In this paper, we introduce the perspective of \emph{spectral representations} as a solution to address these difficulties in RL. Stemming from the spectral decomposition of the transition operator, this framework yields an effective abstraction of the system dynamics for subsequent policy optimization while also providing a clear theoretical characterization. We reveal how to construct spectral representations for transition operators that possess latent variable structures or energy-based structures, which implies different learning methods to extract spectral representations from data. Notably, each of these learning methods realizes an effective RL algorithm under this framework. We also provably extend this spectral view to partially observable MDPs. Finally, we validate these algorithms on over 20 challenging tasks from the DeepMind Control Suite, where they achieve performances comparable or superior to current state-of-the-art model-free and model-based baselines. Our code is publicly released at \url{https://spectral-rl.github.io}.
\end{abstract}

\keywords{reinforcement learning, optimal control, representation learning, self-supervised learning, world models}

\newpage
\tableofcontents

\newpage

\section{Introduction}

Reinforcement Learning (RL) has emerged as a powerful tool in real-world applications that involve sequential decision-making \citep{degrave2022magnetic,ouyang2022training,tang2025deep}. It formulates tasks based on Markov Decision Processes (MDPs) and learns an optimal policy that can maximize the expected cumulative return through interacting with an unknown environment \citep{sutton1998reinforcement}. For MDPs with finite state and action spaces, known as the tabular case, efficient and convergent algorithms with well-defined theoretical sample complexities have been developed \citep{auer2008near,dann2015sample,agrawal2017optimistic,azar2017minimax,jin2018q}. However, the optimization cost of these algorithms quickly becomes unacceptable as the volume of the state space and action space grows beyond tabular cases, even towards infinity in continuous state-action cases. To tackle such scalability challenges, modern RL algorithms employ function approximations to parameterize the core components in the pipeline \citep{jin2020provably,yang2020function,long2021perturbational}. Based on which component is approximated, these algorithms can be roughly classified into two categories: \textit{model-free RL} and \textit{model-based RL}. Model-free methods directly parameterize the policy and/or value functions, which are then optimized in an end-to-end fashion with trial-and-error interactions with the environment \citep{dai2018sbeed,fujimoto2018addressing,haarnoja2018soft}. In contrast, model-based RL trains a surrogate dynamics model to capture the environment's state transition, and the optimal policy can then be derived by planning or model predictive control using this learned model \citep{hafner2019dream,janner2019trust}. 

The architectural simplicity of modern model-free RL has enabled its successful integration with powerful function approximations (\eg, deep neural networks) for policy/value function parameterization, leading to notable applications in diverse domains, such as generative model fine-tuning \citep{ouyang2022training,yang2024using}, video games \citep{mnih2013playing,berner2019dota}, and robotic control \citep{tang2025deep}. However, such expressive power also comes with theoretical challenges: with general function approximations, standard RL methods such as temporal-difference learning with off-policy data can diverge under some circumstances, a phenomenon known as the ``deadly triad" \citep{sutton1998reinforcement}. Furthermore, model-free RL is completely blind to the underlying dynamics, 
ultimately leading to a prohibitive sample complexity in practice \citep{fujimoto2018addressing,haarnoja2018soft}. 

\begin{figure}[t]
    \centering
    \includegraphics[width=0.95\linewidth]{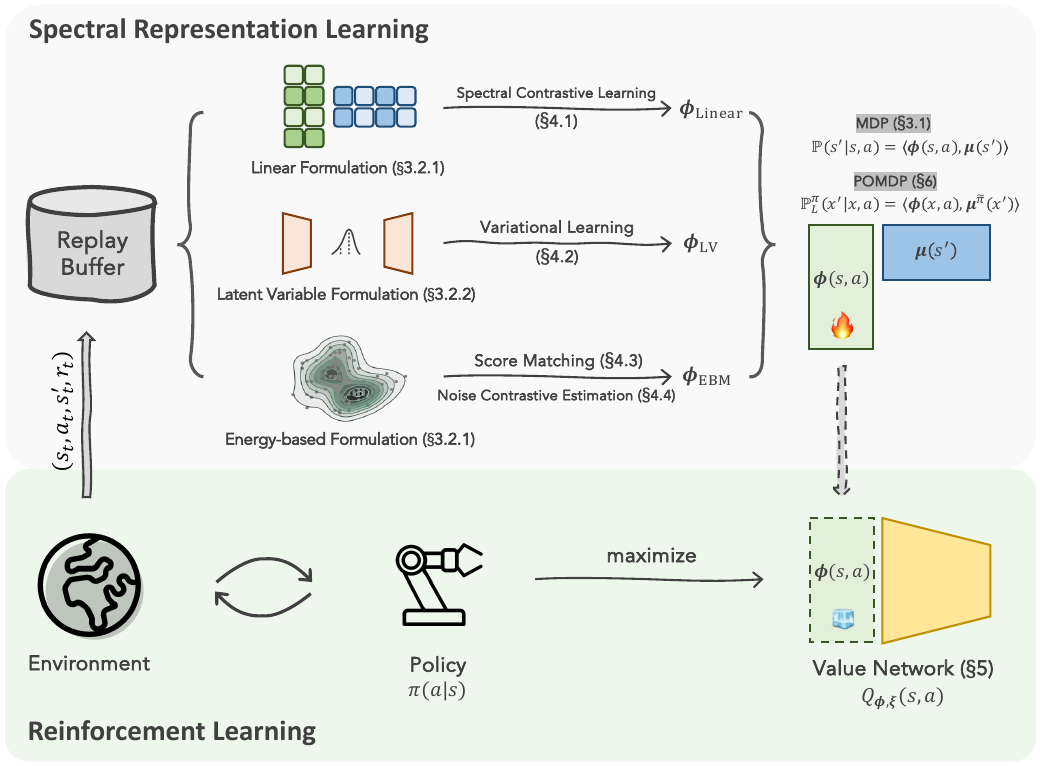}
    \caption{The overview of spectral representation-based reinforcement learning.}
    \label{fig:banner}
\end{figure}

Model-based RL, on the other hand, explicitly learns a model of the environment's dynamics and leverages this model to facilitate policy optimization \citep{wang2019benchmarking,luo2024survey}. When the model possesses a certain structure (\eg, linear quadratic regulators (LQRs)), it can be shown that optimal policies can be derived efficiently. However, in complex domains where the state transition can be nonlinear or non-deterministic, advanced modeling techniques (\eg, recurrent \citep{hafner2019learning,hafner2019dream}, attentional \citep{chen2022transdreamer,micheli2022transformers}, and diffusion-based models \citep{ding2024diffusion,alonso2024diffusion}) with larger capacity become necessary in order to accurately reflect the true dynamics and reduce the approximation error. This introduces a dilemma analogous to that faced in model-free RL: with non-linear models, solving for an optimal policy via planning is computationally intractable, and becoming a long-standing challenging problem in control community~\citep{khalil2015nonlinear}.
Consequently, practical model-based RL must resort to approximate inference methods, such as Dyna-style planning \citep{kurutach2018model,janner2019trust,luo2024survey} or backpropagating the gradient through the models \citep{deisenroth2011pilco,heess2015learning}. These compromises not only risk converging to suboptimal solutions, but also incur substantial costs both during training and inference due to their intricate designs, and thus waste the flexibility in modeling and effort in learning. 

In conclusion, a persistent challenge pervades both model-based and model-free reinforcement learning: there is a lack of unified frameworks that can be simultaneously characterized by rigorous statistical guarantees and realized as computationally efficient, practical algorithms. This fundamental challenge leads to a very natural question: 
\begin{center}
    \emph{Can we develop both provably efficient and practically effective algorithms for reinforcement learning? }
\end{center}

Motivated by the recent success of representation learning, we present \textit{spectral representations} as an affirmative answer to the above question. Specifically, spectral representations stem from the functional decomposition of the transition probability operator in MDP, thereby encoding sufficient information about the underlying dynamics. Leveraging an analysis analogous to the linear MDP literature \citep{jin2020provably}, it can be demonstrated that the $Q$-value functions in RL can be sufficiently expressed with spectral representations, which paves the way for efficient learning and planning. Although several algorithms have been developed to leverage the power of spectral representations for reinforcement learning \citep{ren2022free,zhang2022making,ren2023latent,ren2023spectral,zhang2023provable,shribak2024diffusion}, this field nevertheless lacks a coherent framework to unify these methods, elucidate their connections, and conduct fair comparison. This survey aims to fill this blank by organizing these disparate methods into a cohesive overview and systematically benchmarking them through controlled experiments. 

We illustrate the structure of this survey in~\figref{fig:banner}.
Specifically, after introducing the background knowledge in~\secref{sec:background}, \secref{sec:spectral_representations} begins with a comprehensive overview of the spectral representation, covering its definitions and properties ({\secref{sec:spectral_view_of_mdps}}) and its construction with different dynamics structures (\secref{sec:linear_nonlinear_spectral_representation}). To address the intractability of spectral decomposition and maximum likelihood estimation (MLE) in large-scale spaces, \secref{sec:learning_spectral_representations} reviews practical algorithms for learning these representations directly from data. Next, \secref{sec:rl_with_spectral_rep} details how to integrate these learned representations into the reinforcement learning pipeline. \secref{sec:pomdps} extends the framework to partially observable Markov decision processes (POMDPs), thereby accommodating more realistic decision-making scenarios. Finally, to validate the effectiveness as well as to facilitate further research in this direction, \secref{sec:experiments} presents an extensive empirical study. We release a unified codebase and benchmark performance on over 20 continuous control tasks from the DeepMind Control Suite \citep{tassa2018deepmind}, encompassing diverse settings that include both fully observable state-based inputs and partially observable visual data. Generally, the algorithms based on spectral representations outperform their model-free counterparts, especially as task dynamics become more complex. In visual settings, these methods achieve performance comparable or superior to state-of-the-art model-based algorithms while obviating the need for computationally expensive trajectory synthesis. Finally, through a detailed ablation analysis, we investigate the influence of certain key design choices to provide a deeper understanding of these methods.

\section{Preliminaries}\label{sec:background}
We formulate the task as a Markov Decision Process (MDP) specified by a tuple $\Mcal=\langle \Scal, \Acal, \PP, r , \gamma, d_0\rangle$, where $\Scal$ is the state space, $\Acal$ is the action space, $\PP:\Scal\times\Acal\rightarrow \Delta(\Scal)$ specifies the transition probability of states, $r :\Scal\times\Acal\rightarrow \RR$ is the instantaneous reward function, $\gamma\in[0, 1)$ is the discounting factor and $d_0\in\Delta(\Scal)$ is the initial state distribution. The agents start with an initial state $s_0\sim d_0$ and select an action $a_t\sim\pi(\cdot | s_t)$ following its policy $\pi:\Scal\to\Delta(\Acal)$. The environment then takes $a_t$,  transits to a new state $s_{t+1}$, and provides scalar feedback $r_t=r (s_t, a_t)$ to the agent. RL seeks to optimize the policy to maximize its expected cumulative return, defined as:
\begin{equation}
    \begin{aligned}
        \Jcal(\pi,\Mcal):=\EE_{a_t\sim \pi(\cdot|s_t),s_{t+1}\sim \PP(\cdot|s_t, a_t)}\left[\sum_{t=0}^\infty\gamma^tr (s_t, a_t)\Big|s_0\sim d_0\right].
    \end{aligned}
\end{equation}
To evaluate the values of each state and action, we define the state value functions $V^\pi: \Scal \to \RR$ and state-action value functions $Q^\pi: \Scal\times\Acal\to \RR$ of a given policy $\pi$ as:
\begin{equation}
    \begin{aligned}
        V^\pi(s)&:=\EE_{a_t\sim \pi(\cdot|s_t), s_{t+1}\sim \PP(\cdot|s_t, a_t)}\left[\sum_{t=0}^\infty\gamma^t r (s_t, a_t)\Big| s_0=s\right],\\
        Q^\pi(s, a)&:=\EE_{a_t\sim \pi(\cdot|s_t), s_{t+1}\sim \PP(\cdot|s_t, a_t)}\left[\sum_{t=0}^\infty\gamma^t r (s_t, a_t)\Big| s_0=s, a_0=a\right].
    \end{aligned}
\end{equation}
$V^\pi$ and $Q^\pi$ satisfy the following recursion:
\begin{equation}\label{eq:bellman_equation}
    \begin{aligned}
        Q^\pi(s, a)&=r(s, a)+\gamma \EE_{s_{t+1}\sim \PP(\cdot|s_t, a_t)}\left[V^\pi(s_{t+1})\right]\\
        &=r(s, a)+\gamma \EE_{s_{t+1}\sim \PP(\cdot|s_t, a_t), a_{t+1}\sim\pi(\cdot|s_{t+1})}\left[Q^\pi(s_{t+1}, a_{t+1})\right],
    \end{aligned}
\end{equation}
which is also known as the Bellman equation. With the help of value functions, the objective of RL can be conveniently defined as finding $\pi^*=\argmax_{\pi}\ \EE_{s\sim d_0}\left[V^\pi(s)\right]=\argmax_{\pi}\ \EE_{s\sim d_0, a\sim \pi}\left[Q^\pi(s, a)\right]$.

Finally, we define the \emph{occupancy measure} $d^\pi(\cdot, \cdot)\in\Delta(\Scal\times\Acal)$ as the normalized discounted probability of visiting $(s, a)$ under policy $\pi$:
\begin{equation}
    \begin{aligned}
        d^\pi(s, a):=(1-\gamma)\EE_{\pi,\PP}\left[\sum_{t=0}^\infty\gamma^t\II\{s_t=s, a_t=a\}\right].
    \end{aligned}
\end{equation}
Based on the stationary condition, the occupancy measure satisfies the following recursion:
\begin{equation}\label{eq:recursion_occupancy}
    \begin{aligned}
        d^\pi(s, a)&=(1-\gamma)d_0(s)\pi(a|s)+\gamma\int d^\pi(\tilde{s}, \tilde{a})\PP^\pi(s, a|\tilde{s}, \tilde{a})\mathrm{d}\tilde{s}\mathrm{d}\tilde{a},
    \end{aligned}
\end{equation}
where $\PP^\pi(s, a|\tilde{s},\tilde{a})$ is a shorthand for $\PP(s|\tilde{s},\tilde{a})\pi(a|s)$. 

\section{The Framework of Spectral Representations}\label{sec:spectral_representations}

This section introduces a spectral perspective on the transition dynamics of MDPs by formalizing them through a functional decomposition within a Hilbert space.
Crucially, such linear structure inspires a family of representations, which we term \textit{spectral representations}, that can capture the characteristics of the system dynamics and allow for sufficient expression of $Q$-value functions (\secref{sec:spectral_view_of_mdps}). 
We then reveal how to construct these spectral representations for both linear and non-linear dynamics (\secref{sec:linear_nonlinear_spectral_representation}), demonstrating the wide applicability of this framework. 

\subsection{Spectral View of MDPs}\label{sec:spectral_view_of_mdps}
While seminal work in reinforcement learning has proposed efficient algorithms for tabular MDPs with finite state and action spaces \citep{auer2008near,dann2015sample,agrawal2017optimistic}, scaling to large or continuous spaces has remained a challenge. Recent works have turned to function approximations to overcome such challenges. A prominent approach in this direction is the linear (low-rank) MDP hypothesis \citep{jin2020provably}, which posits that the transition operator admits a low-rank linear decomposition. Inspired by this, we consider a more generalized version without the finite rank assumption. Specifically, for any well-behaved transition operator $\PP$ and reward function $r$, there always exist two feature maps $\phib: \Scal\times\Acal\to \Hcal$, $\mub:\Scal\to\Hcal$ and a vector $\thetab_r\in\Hcal$ for some proper Hilbert space $\Hcal$, such that for $\forall (s,a)\in\Scal\times\Acal$, we have
    \begin{equation}\label{eq:spectral_mdp}
        \begin{aligned}
            \PP(s'|s, a) &=\langle\phib(s, a), \mub(s')\rangle_\Hcal,\\
            r(s, a)&=\langle \phib(s, a), \thetab_r\rangle_\Hcal.\\
        \end{aligned}
    \end{equation}
In fact, since the feature maps and the vector can be infinite-dimensional, such a decomposition always exists for the transition kernel $\PP$. To see this, the singular value decomposition (SVD) of the transition operator \citep{mollenhauer2020singular} gives a concrete example:
\begin{equation}
    \begin{aligned}
        \PP(s'|s, a) = \sum_{i\in I}\sigma_iu_i(s, a)v_i(s'),
    \end{aligned}
\end{equation}
where $I$ is an finite or countably infinite ordered index set, $\{u_i\}_{i\in I}$ and $\{v_i\}_{i\in I}$ are two orthogonal systems of the function spaces on $\Scal\times\Acal$ and $\Scal$, respectively. By expressing this summation as an inner product, we obtain a realization of the aforementioned linear decomposition. Furthermore, commonly adopted structures in the literature are instantiations of the decomposition in \eqref{eq:spectral_mdp} with additional assumptions. For example, linear MDP assumes $\Hcal=\RR^d$ for some finite $d$. Tabular MDP is also a special case, where $\Hcal=\RR^d$ with $d=|\Scal|\times|\Acal|$ and the feature map $\phib(s, a)$ is the canonical basis vector $\boldsymbol{e}_{(s, a)}\in\RR^d$. 

A key consequence of this linear structure is that the $Q$-value functions of any given policy $\pi$ also inherit this linearity \citep{yao2014pseudo,jin2020provably,yang2020reinforcement}.  
\begin{lemma}\label{lemma:q_sufficiency}
    For MDPs with \eqref{eq:spectral_mdp} and any policy $\pi$, there exists weights $\boldsymbol{\eta}^\pi\in\Hcal$ such that $\forall (s,a)\in\Scal\times\Acal$, $Q^\pi(s, a) = \inner{\phib(s, a)}{ \boldsymbol{\eta}^\pi}_\Hcal$. 
\end{lemma}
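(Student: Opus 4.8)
The plan is to read the weight vector $\boldsymbol{\eta}^\pi$ off directly from the Bellman equation \eqref{eq:bellman_equation}. Starting from $Q^\pi(s,a) = r(s,a) + \gamma\,\EE_{s'\sim \PP(\cdot|s,a)}[V^\pi(s')]$ and substituting the linear forms of $r$ and $\PP$ from \eqref{eq:spectral_mdp}, the reward term contributes $\inner{\phib(s,a)}{\thetab_r}_\Hcal$, and the expectation term becomes $\int_\Scal \inner{\phib(s,a)}{\mub(s')}_\Hcal\, V^\pi(s')\,\mathrm{d}s'$, in which the vector $\phib(s,a)$ (fixed once $(s,a)$ is fixed) can be pulled outside the integral by linearity and continuity of the functional $\inner{\phib(s,a)}{\cdot}_\Hcal$. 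This identifies the candidate $\boldsymbol{\eta}^\pi := \thetab_r + \gamma\int_\Scal \mub(s')\,V^\pi(s')\,\mathrm{d}s'$, which depends on $\pi$ only through the scalar function $V^\pi$ and not on $(s,a)$; substituting it back, the Bellman equation reads exactly $Q^\pi(s,a) = \inner{\phib(s,a)}{\boldsymbol{\eta}^\pi}_\Hcal$, uniformly in $(s,a)$. This is not circular: $Q^\pi$ and $V^\pi$ are already well defined as the unique fixed point of the $\gamma$-contractive Bellman operator, and we are merely exhibiting their functional form.

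The only genuine work is to certify that $\boldsymbol{\eta}^\pi$ is a bona fide element of $\Hcal$ and that swapping $\inner{\phib(s,a)}{\cdot}_\Hcal$ with $\int_\Scal$ is legitimate; this is the main obstacle, because $\Hcal$ may be infinite-dimensional, so $\int_\Scal \mub(s')\,V^\pi(s')\,\mathrm{d}s'$ must be read as a Bochner (vector-valued) integral. I would establish this by observing that $V^\pi$ is uniformly bounded, by $\|r\|_\infty/(1-\gamma)$ whenever $r$ is bounded, so that $\|\mub(s')\,V^\pi(s')\|_\Hcal \le \tfrac{\|r\|_\infty}{1-\gamma}\,\|\mub(s')\|_\Hcal$; hence the Bochner integral converges as soon as $\mub$ is Bochner-integrable against the base measure on $\Scal$ --- a mild regularity condition already implicit in writing \eqref{eq:spectral_mdp}, automatic in the finite-dimensional linear and tabular instantiations, and, for the canonical SVD realization, a consequence of the transition operator and the value function being bounded. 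Once the integral lives in $\Hcal$, its commutation with the bounded linear functional $\inner{\phib(s,a)}{\cdot}_\Hcal$ is a standard property of the Bochner integral, closing the argument.

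If one prefers not to invoke this bookkeeping only implicitly, an equivalent route is a fixed-point iteration: set $\boldsymbol{\eta}^\pi_0 := \thetab_r$ and $\boldsymbol{\eta}^\pi_{k+1} := \thetab_r + \gamma\int_\Scal \mub(s')\,\EE_{a'\sim\pi(\cdot|s')}\!\big[\inner{\phib(s',a')}{\boldsymbol{\eta}^\pi_k}_\Hcal\big]\,\mathrm{d}s'$, show by induction that the $k$-horizon truncation of $Q^\pi$ equals $\inner{\phib(s,a)}{\boldsymbol{\eta}^\pi_k}_\Hcal$, and then argue that $(\boldsymbol{\eta}^\pi_k)_{k\ge 0}$ is Cauchy in $\Hcal$ --- the differences $\boldsymbol{\eta}^\pi_{k+1}-\boldsymbol{\eta}^\pi_k$ contract geometrically because the corresponding $Q$-function increments do so in sup-norm with ratio $\gamma$. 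The limit is the desired $\boldsymbol{\eta}^\pi$, and along the way one obtains the explicit Neumann-type series $\boldsymbol{\eta}^\pi = \sum_{k\ge0}\gamma^k (M^\pi)^k\thetab_r$ for the induced linear map $M^\pi$ on $\Hcal$. Either way, the algebraic heart of the lemma is one line of bilinearity applied to the Bellman equation; everything else is functional-analytic hygiene.
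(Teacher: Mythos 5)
Your proposal is correct and its core argument is exactly the paper's proof: substitute the linear forms of $r$ and $\PP$ from \eqref{eq:spectral_mdp} into the Bellman equation, pull $\phib(s,a)$ out of the integral, and set $\boldsymbol{\eta}^\pi = \thetab_r + \gamma\int_\Scal \mub(s')V^\pi(s')\,\mathrm{d}s'$. The additional Bochner-integrability bookkeeping and the fixed-point alternative are careful extras the paper leaves implicit (it simply assumes a ``well-behaved'' transition operator), but they do not change the approach.
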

\begin{proof}
    The linearity of $Q$-value functions can be examined by the Bellman equation:
    \begin{equation}
        \begin{aligned}
            Q^\pi(s, a) &= r(s, a)+\gamma \int_\Scal \PP(s'|s, a)V(s')\mathrm{d}s'\\
            &=\inner{\phib(s, a)}{\thetab_r}_\Hcal+\gamma \inner{\phib(s, a)}{\int_\Scal \mub(s')V(s')\mathrm{d}s'}_\Hcal\\
            &=\bigg\langle\phib(s, a),\underbrace{\thetab_r+\gamma \int_\Scal \mub(s')V(s')\mathrm{d}s'}_{\boldsymbol{\eta}^\pi}\bigg\rangle_\Hcal.\\
        \end{aligned}
    \end{equation}
\end{proof}
That is, $Q$-value functions lie in the linear function space spanned by the same feature map $\phib(s, a)$. This property is significant for two reasons. First, it simplifies theoretical analysis by allowing us to focus on linear $Q$-value functions, thereby reducing the complexity and facilitating uncertainty quantification. Second, it suggests that $\phib$ can serve as a representation for efficient algorithm design, as we can first extract $\phib$ from the dynamics and only learn the rest part (\ie, $\boldsymbol{\eta}^\pi$) with reinforcement learning. Given that $\phib(s, a)$ spans the same space as the orthogonal system from SVD, we term such representations as \emph{spectral representations}. 

\paragraph{Remark (Connection to Existing Representations from Decomposition View)} To better situate the proposed spectral representations in the vast representation learning literature, we summarize existing representations that stem from spectral decomposition of transition-related matrices in Table \ref{tab:rep_compare}. As discussed, spectral representations are derived from the left singular vectors of the single-step transition operator. In contrast, successor features \citep{dayan1993improving, kulkarni2016deep,barreto2017successor} are actually the left singular vector of $(I-\gamma\PP^\pi)^{-1}$, as proved in~\citep{ren2023spectral}, which summarizes future state visitation from given states under a prefixed policy. The Laplacian-based methods \citep{wu2018laplacian} compute representations from the eigenvectors of the symmetrized transition matrix with the largest eigenvalues. Finally, if the reward function is known a priori, the Krylov basis \citep{petrik2007analysis} suggests using the series $\{(\PP^\pi)^ir\}_{i=1}^k$ as the representation for value functions. 
\begin{table}[thb]
\caption{A unified matrix decomposition perspective of various representations. In this table, $\texttt{svd}$ and $\texttt{eig}$ denotes SVD and eigen decomposition, $\PP^\pi$ denotes the state transition $\PP(s'|s)$ 
under policy $\pi$, and $\Lambda$ is a diagonal matrix.}\label{tab:rep_compare}
\centering
\begin{tabular}{cc}\\
\toprule  
Representation & Decomposition \\
\midrule
Spectral Representation & $\texttt{svd}(\PP)$ \\
Successor Feature \citep{dayan1993improving} & $\texttt{svd}\left((I-\gamma \PP^\pi)^{-1}\right)$\\
Laplacian \citep{wu2018laplacian}& $\texttt{eig}\left(\Lambda \PP^\pi + (\PP^\pi)^\top \Lambda\right)$\\
Krylov Basis \citep{petrik2007analysis} & $\{(\PP^\pi)^ir\}_{i=1}^k$\\
\bottomrule
\end{tabular}
\end{table}

Besides connecting the spectral representation to existing representations through the decomposition lens, more importantly, such a view also reveals the advantages of spectral representation over existing representation. 
Existing spectral representations are derived from $\PP^\pi$ (the policy-specific transition operator), which inherently introduces inter-state dependency tied to the specific policy $\pi$. This results in an unnecessary inductive bias in the state-only spectral features, potentially hindering generalizability across different policies. Furthermore, these methods entirely neglect the exploration dilemma: the policy influences the data composition used for representation learning, yet the representations simultaneously affect policy optimization. This coupling significantly complicates the process.
In contrast, the spectral representation investigated in this paper sidesteps the undesired dependency on $\pi$ by extracting state-action representations from $\PP\rbr{s'|s, a}$. 


\paragraph{Remark (Primal-Dual Spectral Representations)} With a fixed policy $\pi$, if we consider the spectral decomposition of the state-action transition kernel and the initial distribution:
\begin{equation}
    \begin{aligned}
        d_0(s)&=\inner{\thetab_d}{\mub(s)}_\Hcal,\\
        \PP^\pi(s', a'|s, a) &=\PP(s'|s, a)\pi(a'|s')=\Big\langle\phib(s, a),\underbrace{\mub(s')\pi(a'|s')}_{\mub^\pi(s', a')}\Big\rangle_\Hcal,
    \end{aligned}
\end{equation}
then, according to the recursion in \eqref{eq:recursion_occupancy}, the occupancy measure also admits the linear structure over the other side of the factor $\mub^\pi(s, a)$:
\begin{equation}
    \begin{aligned}
        d^\pi(s, a) &=(1-\gamma)d_0(s)\pi(a|s)+\gamma\int d^\pi(\tilde{s}, \tilde{a})\PP^\pi(s, a|\tilde{s}, \tilde{a})\mathrm{d}\tilde{s}\mathrm{d}\tilde{a}\\
        &=(1-\gamma)\inner{\thetab_d}{\mub^\pi(s, a)}_\Hcal + \gamma \int d^\pi(\tilde{s}, \tilde{a})\inner{\phib(\tilde{s}, \tilde{a})}{\mub^\pi(s, a)}_\Hcal\mathrm{d}\tilde{s}\mathrm{d}\tilde{a}\\
        &=\Bigg\langle\underbrace{(1-\gamma)\thetab_d+\gamma\int d^\pi(\tilde{s}, \tilde{a})\phib(\tilde{s}, \tilde{a})\mathrm{d}\tilde{a}}_{\boldsymbol{\eta}^\pi}\ ,\ \mub^\pi(s, a)\Bigg\rangle_\Hcal.
    \end{aligned}
\end{equation}
Exploiting this structure, \citet{hu2024primal} reduces the saddle-point optimization of the off-policy evaluation problem to a convex-concave objective, using $\phib(s, a)$ as the primal representation for $Q$-value functions and $\mub^\pi(s, a)$ as the dual representation for the occupancy measure.

\subsection{Identifying Spectral Representations from Dynamics}\label{sec:linear_nonlinear_spectral_representation}

However, although the decomposition \eqref{eq:spectral_mdp} holds in general, the system dynamics do not necessarily admit this form. In this section, we will examine popular models of system dynamics and reveal how to derive their corresponding spectral representations. Our discussion begins with the most straightforward case of linear dynamics, and finally extends to complex non-linear dynamics, such as latent-variable models and energy-based models. For the ease of presentation, we will only consider full-observable MDPs in this section.

\subsubsection{Linear Formulation}\label{sec:linear}
Consider the case where the system dynamics is linear, \ie, there exists $\varphib: \Scal\times\Acal\to\RR^d$ and $\nub: \Scal\to\RR^d$ such that
\begin{equation}\label{eq:linear_spectral_representation}
    \begin{aligned}
        \PP_\linear(s'|s, a) &=\varphib(s, a)^\top\nub(s'), 
    \end{aligned}
\end{equation}
By comparing this linear formulation \eqref{eq:linear_spectral_representation} with \eqref{eq:spectral_mdp}, we instantly identify the spectral representation as $\phib(s, a)=\varphib(s, a)\in\RR^d$. This is theoretically equivalent to the linear MDP hypothesis in \citet{jin2020provably}; however, we note that this structure is limited, as the finite-rank linearity assumption is strong and restricts the model's expressive capacity.

\subsubsection{Latent Variable Formulation}\label{sec:lv}

Consider a transition operator $\PP_{\text{LV}}$ with a latent variable structure. Assuming the latent variable is $z\in\Zcal$ and there exist two probability measures $\varphi(z|s, a)$ and $\nu(s'|z)$ such that the latent variable dynamics model can be expressed as
\begin{equation}
        \PP_{\text{LV}}(s'|s, a) = \int_\Zcal \varphi(z|s,a)\nu(s'|z)\mathrm{d}z, 
\end{equation}
where $\varphi(\cdot|s,a): \Scal\times\Acal\rightarrow \Delta\rbr{\Zcal}$ and $\nu\rbr{\cdot|z}: \Zcal\rightarrow \Delta\rbr{\Scal}$. Notably, if the measures $\varphi(\cdot|s, a)\in L_2$ and $\nu(s'|\cdot)\in L_2$, the integral in the latent variable formulation can also be formulated as an inner product \citep{ren2023latent}:
\begin{equation}\label{eq:lvrep}
        \PP_\lv(s'|s, a)=\inner{\varphi(\cdot | s, a)}{\nu(s'|\cdot)}_{L_2},
\end{equation}
which instantly follows the decomposition defined in \eqref{eq:spectral_mdp} and recognizes the measure $\varphi(\cdot|s, a)$ as the spectral representation, denoted as $\phib_{\lv}$. 
The representation $\phib_\lv$ 
can be finite or infinite dimensional for discrete or continuous latent variable $z$, respectively. 

In fact, this structure is widely adopted in model-based reinforcement learning and world models with varied specific choices for the latent space and distributions. For instance, setting $\Zcal=\RR^d$ and using Gaussian distributions for $p$ recovers the dynamics modeling used in PlaNet \citep{hafner2019learning} and Dreamer \citep{hafner2019dream}; while setting $Z=\{1, 2, \ldots, D\}$ and $p(\cdot|s, a)$ as Categorical distributions recovers the models in DreamerV2 \citep{hafner2020mastering} and DreamerV3 \citep{hafner2023mastering}. Moreover, such latent dynamics can also be built upon learned state representations, as demonstrated by \citet{zhou2411dino} and \citet{xiang2025pan}.

\paragraph{Remark (Theoretical limitations of $\phib_\lv$).} Since $\varphi(\cdot|s, a)$ must be a valid probability measure, latent variable models are thus limited in expressiveness as compared to the unconstrained version. In fact, \citet{agarwal2020flambe} has demonstrated that the simplex feature $\phib_\lv$ can be exponentially weaker than the linear counterpart $\phib_\linear$, meaning that $\phib_\lv$ may require exponentially more dimensions than $\phib_\linear$ to characterize the same transition operator. However, as the theoretical characterization in \secref{sec:rl_with_spectral_rep} suggests, the apparent dimension may not be a good measure of the complexity of the representations. As long as the transition operator satisfies some regularity conditions, sampling-efficient learning is still possible.

\subsubsection{Energy-based Formulation}\label{sec:ebm}
As one of the most flexible model parameterizations, energy-based models (EBMs) associate the transition probability with an energy function $E(s, a, s')$: 
\begin{equation}
    \begin{aligned}
        \PP_\ebm(s'|s, a)&=\frac{\exp(E(s, a, s'))}{Z(s, a)}=\frac{\exp(\varphib(s, a)^\top \nub(s'))}{Z(s, a)},
    \end{aligned}
\end{equation}
where the $Z(s, a)$ is the normalizing factor satisfying $Z(s, a)=\int_\Scal\exp(E(s, a, s'))\mathrm{d}s'$, and the energy function can be factorized as the inner product of $\varphib: \Scal\times\Acal\to\RR^d$ and $\nub: \Scal\to\RR^d$. By simple algebra manipulation, 
\begin{equation}
    \begin{aligned}
        \varphib(s, a)^\top\nub(s')&=-\frac 12\left(\|\varphib(s, a) - \nub(s')\|^2 - \|\varphib(s, a)\|^2 - \|\nub(s')\|^2\right), 
    \end{aligned}
\end{equation}
and therefore,
\begin{equation}\label{eq:ebm1}
    \begin{aligned}
        \PP_\ebm(s'|s, a)&=\frac{\exp(\frac 12\|\varphib(s, a)\|^2)\exp(-\frac 12\|\varphib(s, a)-\nub(s')\|^2)\exp(\frac 12\|\nub(s')\|^2)}{Z(s, a)}.\\
    \end{aligned}
\end{equation}
In order to reveal the linear perspective of \eqref{eq:ebm1}, note that $\exp(\frac 12 \|\varphib(s, a)-\nub(s')\|)$ in \eqref{eq:ebm1} is precisely a Gaussian kernel, and therefore
\begin{equation}\label{eq:ebm2}
    \begin{aligned}
        \PP_\ebm(s'|s, a) &=\frac{\exp(\frac 12\|\varphib(s, a)\|^2)k\left(\varphib(s, a), \nub(s')\right)\exp(\frac 12\|\nub(s')\|^2)}{Z(s, a)}\\
        &=\inner{\frac{\exp(\frac 12\|\varphib(s, a)\|^2)}{Z(s, a)}k\left(\varphib(s, a), \cdot\right)}{k\left(\cdot, \nub(s')\right)\exp(\frac 12\|\nub(s')\|^2)}_{\Hcal_k}.
    \end{aligned}
\end{equation}
This final expression represents the transition as an inner product in $\Hcal_k$, the reproducing kernel Hilbert space of the Gaussian kernel $k$, and satisfies the decomposition in \eqref{eq:spectral_mdp}. However, this RKHS representation $k(\varphib(s, a), \cdot)$ and $k(\cdot, \nub(s'))$ is conceptual and implicit. To derive a concrete representation, \citet{ren2022free} and \citet{shribak2024diffusion} adopt random Fourier features \citep{rahimi2007random}, as defined in the following lemma.

\begin{lemma}{(Random Fourier features \citep{rahimi2007random})}
    Consider a shift invariant kernel $k(\xb, \yb)=k(\xb-\yb)$ on $\RR^d$. Define $\zeta_{\omegabb}(\xb)=e^{j\omegabb^\top \xb}$, we have
    \begin{equation}\label{eq:rff}
        \begin{aligned}
            k(\xb-\yb) &= \int_{\RR^d}p(\omegabb)e^{j\omegabb^\top(\xb-\yb)}\mathrm{d}\omegabb=\EE_{\omegabb\sim p(\omegabb)}\left[\zeta_{\omegabb}(\xb)\zeta_{\omegabb}(\yb)^*\right],\\
        \end{aligned}
    \end{equation}
    where $p(\omegabb)$ is the Fourier transform of the kernel $k(\Delta)$. Particularly, for $k(\Delta)=\exp(-\frac{\|\Delta\|_2^2}{2})$, then $p(\omegabb)=(2\pi)^{-\frac d2}\exp(-\frac{\|\omegabb\|_2^2}{2})$.
\end{lemma}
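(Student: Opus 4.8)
The plan is to establish Bochner's theorem for the given shift-invariant kernel and then verify the stated Fourier pair. First I would invoke the classical Bochner theorem: a continuous, shift-invariant, positive-definite kernel $k(\xb-\yb)$ on $\RR^d$ that is properly normalized so that $k(\mathbf{0})=1$ is the Fourier transform of a probability measure $p(\omegabb)$ on $\RR^d$. Assuming this, we can write $k(\Delta)=\int_{\RR^d}p(\omegabb)e^{j\omegabb^\top\Delta}\,\mathrm{d}\omegabb$. Substituting $\Delta=\xb-\yb$ and splitting the exponential via $e^{j\omegabb^\top(\xb-\yb)}=e^{j\omegabb^\top\xb}\cdot\overline{e^{j\omegabb^\top\yb}}=\zeta_{\omegabb}(\xb)\zeta_{\omegabb}(\yb)^*$ immediately yields the claimed expectation form $k(\xb-\yb)=\EE_{\omegabb\sim p}[\zeta_{\omegabb}(\xb)\zeta_{\omegabb}(\yb)^*]$. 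Since $k$ is real-valued and even, the imaginary parts cancel in expectation, so the identity is consistent; one may note this but it is not needed for the stated equation.

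Next I would handle the specific Gaussian case $k(\Delta)=\exp(-\tfrac{\|\Delta\|_2^2}{2})$. The task reduces to computing the (inverse) Fourier transform of a Gaussian, which is standard: completing the square in the exponent of $\int p(\omegabb)e^{j\omegabb^\top\Delta}\,\mathrm{d}\omegabb$ with the Gaussian ansatz $p(\omegabb)=(2\pi)^{-d/2}\exp(-\tfrac{\|\omegabb\|_2^2}{2})$ gives $\exp(-\tfrac{\|\Delta\|_2^2}{2})$, using the normalization $\int_{\RR^d}(2\pi)^{-d/2}e^{-\|\omegabb\|^2/2}\,\mathrm{d}\omegabb=1$. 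This both confirms that $p$ is a valid probability density (it is the density of the standard multivariate normal) and that it is the correct Fourier pairing for the chosen kernel normalization. One should double-check the sign and scaling conventions in the Fourier transform so that the factor of $2\pi$ lands in the right place; this is the only place where a careless computation could go wrong.

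The main obstacle, such as it is, is not a deep one: it is ensuring the precise normalization conventions line up — namely that $k$ is normalized so $k(\mathbf{0})=1$ (so that $p$ integrates to one and is genuinely a probability density), and that the Fourier transform is taken with the convention that places $(2\pi)^{-d/2}$ symmetrically or asymmetrically as claimed. Since this lemma is quoted directly from \citet{rahimi2007random}, a short proof would suffice: cite Bochner's theorem for the general statement and give the one-line Gaussian integral for the particular case, with a remark that the expectation form follows by factorizing the complex exponential. No genuinely hard step arises; the content is the correct bookkeeping of constants and the appeal to Bochner's theorem for positive-definiteness.
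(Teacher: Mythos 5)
Your proposal is correct and is essentially the argument the paper relies on by citing \citet{rahimi2007random}: the paper gives no proof of its own, and the standard derivation is exactly your route — Bochner's theorem for the shift-invariant positive-definite kernel, factorizing $e^{j\omegabb^\top(\xb-\yb)}=\zeta_{\omegabb}(\xb)\zeta_{\omegabb}(\yb)^*$, and a Gaussian integral to confirm $p(\omegabb)=(2\pi)^{-d/2}\exp(-\tfrac{\|\omegabb\|_2^2}{2})$ for $k(\Delta)=\exp(-\tfrac{\|\Delta\|_2^2}{2})$. Your attention to the normalization $k(\mathbf{0})=1$ and the Fourier convention is the only bookkeeping that matters, and you handle it correctly.
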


The expectation in \eqref{eq:rff} can be further approximated with Monte-Carlo samples of $\{\omegabb_i\}_{i=1}^N\sim p(\omegabb)$. Besides, since we are working with real-valued transition probabilities, we can replace $\zeta_{\omegabb}(\xb)$ with $[\cos(\omegabb^\top\xb), \sin(\omegabb^\top\xb)]^\top$. This leads to the final approximation of $k(\xb, \yb)$:
\begin{equation}
    \begin{aligned}
        k(\xb - \yb) &= \EE_{\omegabb\sim p(\omegabb)}\left[\zeta_{\omegabb}(\xb)\zeta_{\omegabb}(\yb)^*\right]\\
        &\approx\begin{pmatrix}
            \frac 1{\sqrt{N}}\cos(\omegabb_1^\top\xb)\\
            \frac 1{\sqrt{N}}\sin(\omegabb_1^\top\xb)\\
            \ldots\\
            \frac 1{\sqrt{N}}\cos(\omegabb_N^\top\xb)\\
            \frac 1{\sqrt{N}}\sin(\omegabb_N^\top\xb)\\
        \end{pmatrix}^\top\begin{pmatrix}
            \frac 1{\sqrt{N}}\cos(\omegabb_1^\top\yb)\\
            \frac 1{\sqrt{N}}\sin(\omegabb_1^\top\yb)\\
            \ldots\\
            \frac 1{\sqrt{N}}\cos(\omegabb_N^\top\yb)\\
            \frac 1{\sqrt{N}}\sin(\omegabb_N^\top\yb)\\
        \end{pmatrix}\\
        &=\langle \zetab_N(\xb), \zetab_N(\yb)\rangle_{L_2},
    \end{aligned}
\end{equation}
where we use $\zetab_N(\xb)$ to denote the $2N$-dimensional  sinusoidal vector inside the expectation and the frequencies $\{\omegabb_i\}_{i=1}^N$ are sampled i.i.d. from $p(\omegabb)$. 

Applying this to the Gaussian kernel $\exp(-\frac 12 \|\varphib(s, a)-\nub(s')\|^2)$, we have
\begin{equation}
    \begin{aligned}
        \PP^*_\ebm(s'|s, a)&=\frac{\exp(\|\varphib(s, a)\|^2)\exp(-\frac 12\|\varphib(s, a)-\nub(s')\|^2)\exp(\|\nub(s')\|^2)}{Z(s, a)}\\
        &=\Big\langle\underbrace{\frac{\exp(\|\varphib(s, a)\|^2)}{Z(s, a)}\zetab_N(\varphib(s, a))}_{\phib_\ebm(s, a)}, \underbrace{\exp(\|\nub(s')\|)\zetab_N(\nub(s'))}_{\mub_\ebm(s')}\Big\rangle,
    \end{aligned}
\end{equation}
That is, we can obtain linear factorization for transition operators parameterized as factorized EBMs, and the corresponding spectral representation is denoted $\phib^*_\ebm(s, a)$. 

As an ending note, we can continue to demonstrate that the normalizing factor $Z(s, a)$ can also be expressed by $\varphib(s, a)$, such that the EBM spectral representation $\phib_\ebm$ depends exclusively on the factorized energy $\varphib$ and random Fourier frequencies. This can be examined by
\begin{equation}
    \begin{aligned}
        Z(s, a)&=\int_\Scal\Big\langle\exp(\|\varphib(s, a)\|^2)\zetab_N(\varphib(s, a)), \mub_\ebm(s')\Big\rangle\mathrm{d}s'\\
        &=\Big\langle\exp(\|\varphib(s, a)\|^2)\zetab_N(\varphib(s, a)), \underbrace{\int_\Scal\mub_\ebm(s')\mathrm{d}s'}_{\boldsymbol{\ub}}\Big\rangle.\\
    \end{aligned}
\end{equation}
Therefore, we have
\begin{equation}\label{eq:ebm_spectral_representation}
    \begin{aligned}
        \phib_\ebm(s, a)&=\frac{\exp(\|\varphib(s, a)\|^2)\zetab_N(\varphib(s, a))}{\langle\exp(\|\varphib(s, a)\|^2)\zetab_N(\varphib(s, a)), \ub\rangle}\\
        &=\frac{\zetab_N(\varphib(s, a))}{\langle\zetab_N(\varphib(s, a)), \ub\rangle},\quad\text{where }\ub\in\RR^{2N}.
    \end{aligned}
\end{equation}

\paragraph{Remark.} The case of stochastic nonlinear control \citep{zheng2022optimistic,ren2022free,ren2025stochastic,ma2025offline}, whose state-space dynamics follows the form $s'=\varphib(s,a)+v$ where $v$ is a Gaussian noise $\mathcal{N}(0,\sigma^2I)$, can be understood as a special instance of either the energy-based formulation or latent-variable formulation. In this case, the state transition dynamics can be written as $\PP(s'|s, a)\propto\exp\left(-\frac{\|s'-\varphib(s, a)\|^2}{2\sigma^2}\right)$. By applying the same random Fourier features, we can recover the spectral structure analogous to \eqref{eq:ebm_spectral_representation}. Alternatively, the connection to the latent-variable formulation can be seen through the decomposition:
\begin{equation}
    \begin{aligned}
        \PP(s'|s, a)=\frac{1}{\sqrt{2\pi}\sigma}\exp\left(-\frac{\|s'-\varphib(s, a)\|^2}{2\sigma^2}\right)=\inner{p(\cdot|s, a)}{p(s'|\cdot)}_{L_2}, 
    \end{aligned}
\end{equation}
where $p(z|s, a)\propto \exp(-\|z-\varphib(s, a)\|^2/\sigma^2)$ and $p(s'|z)\propto \exp(-\|z-s'\|^2/\sigma^2)$. Our formulations offer greater flexibility by conducting the modeling in the latent space and by generalizing the latent distributions beyond Gaussian distributions, respectively.

\section{Algorithms for Learning Spectral Representations}\label{sec:learning_spectral_representations}
In Section~\ref{sec:spectral_representations}, we studied how spectral representations can be extracted from transition operators with different structures, using either a linear formulation $\phib_\linear$, a latent variable formulation $\phib_\lv$, or an energy-based formulation $\phib_\ebm$. We will discuss the corresponding learning methods for each parametrization from different perspectives.


We begin by estimating the representation $\phib_\linear$ in the context of linear formulation~\eqref{eq:linear_spectral_representation}. Several works on linear MDPs \citep{agarwal2020flambe,uehara2021representation} derive the representation by solving the following maximum log-likelihood estimation (MLE) problem:
\begin{equation}\label{eq:mle}
    \begin{aligned}
        &\max_{\phib, \mub} \ \EE_{(s, a)\sim\rho, s'\sim \PP(\cdot|s, a)}\left[\log \langle\phib(s, a),\mub(s')\rangle\right]\\
        &\ \ \ \text{s.t.}\ \forall(s, a), \ \  \int_\Scal \langle\phib(s, a),\mub(s')\rangle\mathrm{d}s'=1,
    \end{aligned}
\end{equation}
which, however, is notoriously difficult for a large or continuous state space, since enforcing or even just computing the constraint requires an integral over the whole state space. To sidestep this burden, we review tractable alternatives to derive the representations in this section. 

\subsection{Representations from Spectral Contrastive Learning}
With a slight abuse of notation, let $\PP(s, a, s')$ denote the joint distribution of a state-action pair and its successor state, and let $\PP(s, a)$ and $\PP(s')$ be the corresponding marginals. A straightforward approach is to parameterize the conditional density $\PP(s'|s, a)=\varphib_\theta(s ,a)^\top\nub_\theta(s')$ with parameters $\theta$ and minimize the point-wise squared error:
\begin{equation}
    \begin{aligned}
        \min_{\theta}\ \int_{\Scal\times\Acal}\int_\Scal\nbr{\PP(s'|s, a) - \varphib_\theta(s, a)^\top\nub_\theta(s')}^2\mathrm{d}s\mathrm{d}a\mathrm{d}s'. 
    \end{aligned}
\end{equation}
Apparently, the triple integral over spaces makes it an intractable objective. Instead, \speder \citep{ren2023spectral} considers a parameterization modulated by a scaling factor $\PP(s')$:
\begin{equation}\label{eq:speder_identity}
    \PP(s'|s, a)=\varphib_\theta(s, a)^\top\left(\PP(s')\nub_\theta(s')\right).
\end{equation}
As shown later, as long as $\PP(s')$ is amenable to sampling, introducing this factor leads to a tractable optimization objective without altering the final representation. To train $\varphib_\theta$ and $\nub_\theta$, \speder leverages spectral contrastive learning \citep{haochen2021provable,ren2023spectral} to match a rebalanced version of \eqref{eq:speder_identity}:
\begin{equation}\label{eq:spectral_contrastive_loss}
    \begin{aligned}
        \min_{\theta}\ \ell_{\mathrm{SCL}}(\theta)&=\int_{\Scal\times\Acal}\int_{\Scal}\nbr{\frac{\PP(s, a, s')}{\sqrt{\lambda\PP(s, a)\PP(s')}} - \sqrt{\lambda\PP(s, a)\PP(s')}\varphib_\theta(s, a)^\top\nub_\theta(s')}^2\mathrm{d}s\mathrm{d}a\mathrm{d}s'\\
        &=\lambda\EE_{\PP(s, a)\PP(s')}\left[\left(\varphib_\theta(s, a)^\top\nub_\theta(s')\right)^2\right]-2\EE_{\PP(s, a, s')}\left[\varphib_\theta(s, a)^\top\nub_\theta(s')\right]\\
        &\quad\quad\quad+\underbrace{\int_{\Scal\times\Acal}\int_{\Scal}\nbr{\frac{\PP(s, a, s')}{\sqrt{\lambda\PP(s, a)\PP(s')}}}^2\mathrm{d}s\mathrm{d}a\mathrm{d}s'}_{\text{Const}}\\
    \end{aligned}
\end{equation}
This yields a practical contrastive learning-style objective that is compatible with stochastic gradient descent. Specifically, we can approximate the expectations in the objective by drawing samples from the empirical distributions (i.e., the replay buffer). For each state-action pair $(s, a)$ drawn uniformly from the buffer, the ground-truth successor state serves as the positive sample $s'_+$, and states randomly sampled from the buffer act as negative samples $s'_-$, thereby implementing the spectral contrastive loss as:
\begin{equation}\label{eq:empirical_spectral_contrastive_loss}
    \begin{aligned}
        \min_{\theta}\ \ell_{\mathrm{SCL}}(\theta)= \frac{\lambda}{N}\sum_{n=1}^N\left(\varphib_\theta(s_n, a_n)^\top\nub_\theta(s_{n,-}')\right)^2-\frac{2}{N}\sum_{n=1}^N\left(\varphib_\theta(s_n, a_n)^\top\nub_\theta(s_{n,+}')\right) + \text{Const}.
    \end{aligned}
\end{equation}
Finally, the spectral representation can be recovered by $\phib_\linear\approx\varphib_{\theta^*}$. It must be noted that since in practice we must settle for a finite-dimensional representation $\varphib_{\theta^*}$, such representations have limited expressive power for both transition probabilities and $Q$-value functions.

\paragraph{Remark (Singular Value Decomposition):} Another alternative interpretation of \eqref{eq:spectral_contrastive_loss} is that it is equivalent to the SVD of a scaled transition operator. Define $L_2(\Scal\times\Acal)$ and $L_2(\Scal)$ be the spaces of square summable functions over $\Scal\times\Acal$ and $\Scal$ respectively, we consider the scaled transition kernel $\widetilde{\Tcal}: L_2(\Scal\times\Acal)\to L_2(\Scal)$ which satisfies $(\widetilde{T}f)(s')=\int\frac{\PP(s, a, s')}{\sqrt{\PP(s, a)\PP(s')}}f(s, a)\mathrm{d}(s, a)$. Its SVD tries to find orthogonal eigen-functions $\varphib=\{\varphi_i\}_{i=1}^I$ to capture the principal components:
\begin{equation}\label{eq:svd}
    \begin{aligned}
        &\max_{\varphi_i^\top \varphi_j=\mathbf{1}_{i=j}}\ \sum_{i=1}^I\Bigg\|\int\frac{\PP(s, a, s')}{\sqrt{\PP(s, a)\PP(s')}}\varphi_i(s, a)\mathrm{d}(s, a)\Bigg\|_2^2\\
        &=\max_{\varphi_i^\top \varphi_j=\mathbf{1}_{i=j}}\ \max_{\nu_1, \ldots, \nu_I}\ \sum_{i=1}^I\left(2\iint \frac{\PP(s, a, s')}{\sqrt{\PP(s, a)\PP(s')}}\varphi_i(s, a)\nu_i(s, a)\mathrm d{(s, a)}\mathrm{d}s' - \int \nu_i(s')^2\mathrm{d}s'\right)\\
        &=\max_{\EE_{\PP(s, a)}[\varphi_i' \varphi'_j]=\mathbf{1}_{i=j}}\ \max_{\nu'_1, \ldots, \nu'_I}\ \sum_{i=1}^I\left(2\iint \PP(s, a, s')\varphi'_i(s, a)\nu'_i(s, a)\mathrm d{(s, a)}\mathrm{d}s' - \int \PP(s')\nu'_i(s')^2\mathrm{d}s'\right)\\
        &=\max_{\EE_{\PP(s, a)}[\varphi_i' \varphi'_j]=\mathbf{1}_{i=j}}\ \max_{\nu'_1, \ldots, \nu'_I}\ 2\EE_{\PP(s, a, s')}\left[\sum_{i=1}^I \varphi_i'(s, a)\nu_i'(s')\right]-\EE_{\PP(s, a)\PP(s')}\left[\left(\sum_{i=1}^I\varphi_i'(s, a)\nu_i'(s')\right)^2\right]\\
    \end{aligned}
\end{equation}
where the first equality follows from the Fenchel duality of $\|\cdot\|_2^2$, the second equality comes from the change of variable $\varphi_i'(s, a)=\frac{\varphi_i(s, a)}{\sqrt{\PP(s, a)}}$ and $\nu_i'(s')=\frac{\nu_i(s')}{\sqrt{\PP(s')}}$, and the last equality is due to the orthogonality constraint $\EE[\varphi_i(s, a)\varphi_j(s, a)]=\mathbf{1}_{i=j}$:
\begin{equation}
    \begin{aligned}
        \EE_{\PP(s, a)\PP(s')}\left[\left(\sum_{i=1}^I\varphi_i'(s, a)\nu_i'(s')\right)^2\right]&=\EE_{\PP(s, a)\PP(s')}\left[\sum_{i=1}^I\left(\varphi_i'(s, a)\nu_i'(s')\right)^2\right]\\
        &=\EE_{\PP(s')}\left[\sum_{i=1}^I\left(\nu_i'(s')\EE_{\PP(s, a)}[\varphi_i'(s, a)\varphi_i'(s, a)]\nu_i'(s')\right)\right]\\
        &=\EE_{\PP(s')}\left[\sum_{i=1}^I\nu_i'(s')^2\right].
    \end{aligned}
\end{equation}
Recognizing the summation in \eqref{eq:svd} as the inner product between the representations, the spectral contrastive loss in \eqref{eq:spectral_contrastive_loss} is precisely the variational objective for an SVD of the scaled transition kernel, without the orthonormal constraint. Therefore, spectral contrastive loss~\eqref{eq:spectral_contrastive_loss} recovers the same subspace of SVD. 

\subsection{Representations from Variational Learning}
When the transition operator possesses the latent variable structure as described in \secref{sec:lv}, \lvrep \citep{ren2023latent} employs variational learning to obtain a tractable surrogate objective of the original MLE \eqref{eq:mle}. Specifically, suppose the transition operator is parameterized as $\PP(s' |s, a)=\int \varphi_\theta(z|s, a)\nu_\theta(s'|z)\mathrm{d}z$ for some valid probability measure $\varphi_\theta:\Scal\times\Acal\to\Delta(\Zcal)$, then it follows from Jensen's inequality that
\begin{equation}
    \begin{aligned}
        \log \PP(s'|s, a)&=\log \int \varphi_\theta(z|s, a)\nu_\theta(s'|z)\mathrm{d}z=\log \int q_\theta(z|s, a, s')\frac{\varphi_\theta(z|s, a)\nu_\theta(s'|z)}{q_\theta(z|s, a, s')}\mathrm{d}z\\
        &\geq \underbrace{\EE_{z\sim q_\theta(\cdot|s, a, s')}\left[\log \nu_\theta(s'|z)\right] - D_{\mathrm{KL}}(q_\theta(\cdot|s, a, s')\| \varphi_\theta(\cdot|s, a))}_{\ell_{\mathrm{ELBO}}(\theta)},
    \end{aligned}
\end{equation}
where $q_\theta(\cdot|s, a, s')$ is the variational distribution also parameterized by $\theta$, and the RHS of the last inequality is also known as the evidence lower bound (ELBO) \citep{kingma2013auto}. We can also introduce a hyper-parameter $\beta\geq 0$ and optimize the $\beta$-VAE objective \citep{higgins2017beta} to balance the KL regularization term and the reconstruction term:
\begin{equation}\label{eq:bvae_loss}
    \begin{aligned}
        \ell_{\mathrm{ELBO}, \beta}(\theta)=\EE_{z\sim q_\theta(\cdot|s, a, s')}\left[\log \nu_\theta(s'|z)\right] - \beta D_{\mathrm{KL}}(q_\theta(\cdot|s, a, s')\| \varphi_\theta(\cdot|s, a)).
    \end{aligned}
\end{equation}
Similarly, with the optimal parameter $\theta^*$ obtained by maximizing $\ell_{\mathrm{ELBO}, \beta}$, the spectral representation $\phib_\lv$ can be set as $\varphi_{\theta^*}(\cdot|s, a)$ based on \eqref{eq:lvrep}.

\subsection{Representations from Score Matching}\label{sec:score_matching}

In section \ref{sec:ebm}, we described how to construct spectral representations $\phib_\ebm$ from a factorized energy function $\varphib(s, a)^\top\nub(s')$. A key challenge with this is that estimating the energy function of EBMs with MLE requires computing the normalization factor, which is often intractable \citep{lecun2006tutorial,song2021train}. Diffusion models \citep{ho2020denoising,songscore} offer a promising solution for sampling from EBMs by estimating the \textit{score functions} instead of the energy functions, thereby bypassing the need for this normalization factor. Inspired by this, \diffsr \citep{shribak2024diffusion} derives a tractable, score-based optimization objective for learning $\phib(s, a)$. 

For a given state-action pair $(s, a)$, \diffsr perturbs the samples from the ground-truth transition $s'\sim\PP(s'|s, a)$ using a Gaussian kernel with a noise level chosen from a pre-defined schedule $\{\beta_m\}_{m=1}^M$. For a given noise level $\beta$, the perturbation kernel $\PP(\stil'|s'; \beta)$ and the perturbed transition $\PP(\stil'|s, a; \beta)$ are defined as:
\begin{equation}\label{eq:perturbed_ebm}
    \begin{aligned}
    \PP(\stil'|s'; \beta)&=\Ncal(\sqrt{1-\beta}s', \beta I), \\
    \PP(\stil'|s, a; \beta) &=\int_\Scal\PP(s'|s, a)\PP(\stil'|s'; \beta)\mathrm{d}s'. 
    \end{aligned}
\end{equation}
where $\PP(\stil'|s, a; \beta)\to\PP(\stil'|s, a)$ as $\beta\to 0$. Following the EBM formulation, \diffsr parameterizes this perturbed transition with $\theta$:
\begin{equation}
    \begin{aligned}
        \PP(\stil|s, a; \beta) \propto \exp(\varphib_\theta(s, a)^\top\nub_\theta(\stil'; \beta)), 
    \end{aligned}
\end{equation}
where $\varphib_\theta: \Scal\times\Acal\to\RR^d$ is shared across all noise levels. To learn $\varphib_\theta$, \diffsr matches the score function of the parameterized distribution $\nabla_{\stil'}\log \PP(\stil'|s, a; \beta)\approx\varphib_\theta(s,a)^\top\nabla_{\stil'}\nub_\theta(\stil'; \beta)$ with that of the ground-truth corrupted transition, which yields the following score matching objective:
\begin{equation}
    \begin{aligned}
        \ell_{\mathrm{SM}}(\theta)&=\EE_{(s, a)\sim\rho, \stil'\sim\PP(\cdot|s, a; \beta)}\left[\nbr{\varphib_\theta(s, a)^\top\nabla_{\stil'}\nub_\theta(\stil'; \beta) - \nabla_{\stil'}\log \PP(\stil'|s, a; \beta)}^2\right],\\
    \end{aligned}
\end{equation}
However, the ground-truth score remains intractable, as it involves integrating over all possible next states $s'$. To address this, \diffsr uses the conditional score matching objective:
\begin{equation}\label{eq:conditional_score_matching}
    \begin{aligned}
        \ell_{\mathrm{CSM}}(\theta)=\EE_{(s, a)\sim\rho, s'\sim \PP(\cdot|s, a), \stil'\sim\PP(\cdot|s'; \beta)}\left[\nbr{\varphib_\theta(s, a)^\top\nabla_{\stil'}\nub_\theta(\stil'; \beta) - \nabla_{\stil'}\log \PP(\stil'|s'; \beta)}^2\right],
    \end{aligned}
\end{equation}
which shares the same optimal solution to $\ell_{\mathrm{SM}}$, as proved in the Appendix A of \citet{shribak2024diffusion}. Note that the regression target has analytical forms when the perturbation kernel is known, and in our case, $\nabla_{\stil'}\log\PP(\stil'|s';\beta)=-(\stil'-\sqrt{1-\beta}s')/{\beta}$. In practice, \diffsr directly parameterizes $\nabla_{\stil'}\nub_\theta(\stil'; \beta)$ as a neural network $\boldsymbol{\kappa}_\theta:\RR^{\dim(\Scal)}\times\RR\to\RR^{d\times\dim(\Scal)}$ and thereby avoids the need to compute second-order gradients. 

After training is completed, spectral representations $\phib_\ebm$ can be obtained according to $\phib_\ebm\approx\frac{\zetab_N(\varphib_{\theta^*}(s, a))}{\langle\zetab_N(\varphib_{\theta^*}(s, a)), \ub\rangle}$ \eqref{eq:ebm_spectral_representation}, where $\ub$ will be specified later in section \ref{sec:rl_with_spectral_rep}. 

\subsection{Representations from Noise Contrastive Estimation}

Another effective approach to learning the energy function is noise contrastive estimation (NCE) \citep{ma2018noise,gutmann2010noise,gutmann2012noise}. Specifically, let $\widetilde{\PP}(s')$ be an arbitrary noise distribution, and we model the target density as: 
\begin{equation}\label{eq:ebm_parameterization}
    \begin{aligned}
        \PP(s'|s, a)\propto \widetilde{\PP}(s')\exp\left(\varphib_\theta(s, a)^\top\nub_\theta(s')\right).
    \end{aligned}
\end{equation}
For each state-action pair $(s_n, a_n)$, we draw one positive sample $s'_{n, 0}\sim \PP(s'|s, a)$ and $K$ negative samples $\{s'_{n, k}\}_{k=1}^K$ i.i.d. from the noise distribution $\widetilde{\PP}(s')$. The \emph{ranking-based NCE} tries to identify the positive sample among the negatives:
\begin{equation}
    \begin{aligned}
        \ell_{\mathrm{R-NCE}}(\theta)&=\frac 1N\sum_{n=1}^N\frac{\exp(\varphib_\theta(s_n, a_n)^\top\nub_\theta(s'_{n,0}))}{\sum_{k=1}^K\exp(\varphib_\theta(s_n, a_n)^\top\nub_\theta(s'_{n,k}))}
    \end{aligned}
\end{equation}
The optimal parameter $\theta^*=\argmin_{\theta}\ell_{\mathrm{R-NCE}}(\theta)$ provides an estimate of the energy function \eqref{eq:ebm_parameterization}, combined with the negative distribution. Note that the noise distribution $ \widetilde{\PP}(s')$ in \eqref{eq:ebm_parameterization} does not affect the extracted spectral representation $\phib_\ebm$, as it depends only on $s'$ and can be absorbed into the other spectral component, $\mub_\ebm$. 

However, when the negative sample distribution $\widetilde{\PP}(s')$ is trivial or disjoint from the true conditional density $\PP(s'|s, a)$, NCE may yield poor representation since classifying between the positives and the negatives is too easy \citep{robinson2020contrastive,rhodes2020telescoping}. While existing methods often address this by using more negative samples \citep{radford2021learning,chen2022we} or designing more sophisticated noise distributions \citep{finn2016connection}, we propose a simple solution inspired by diffusion models. 

Specifically, we conduct NCE across different perturbed negative distributions $\widetilde{\PP}(\stil';\beta)$ and perturbed EBMs $\PP(\stil'|s, a; \beta)$, each of which is associated with a noise level $\beta$ from the pre-defined schedule $\{\beta_m\}_{m=1}^M$:
\begin{equation}
    \begin{aligned}
        \widetilde{\PP}(\stil';\beta)&=\int_\Scal \PP(\stil'|s';\beta)\widetilde{\PP}(s')\mathrm{d}s',\\
        \PP(\stil'|s, a; \beta) &=\widetilde{\PP}(\stil';\beta)\int_\Scal\PP(s'|s, a)\PP(\stil'|s'; \beta)\mathrm{d}s'\\
        &\propto \widetilde{\PP}(\stil';\beta)\exp(\varphib_\theta(s, a)^\top\nub_\theta(\stil'; \beta)).
    \end{aligned}
\end{equation}
where the perturbation kernel is $\PP(\stil'|s';\beta)=\Ncal(\sqrt{1-\beta}s', \beta I)$. Similar to the score matching case in section \ref{sec:score_matching}, the $\varphib_\theta$ is shared across different noise levels. As illustrated in Figure \ref{fig:nce_perturbed_distributions}, when $\beta\to0$, the perturbed distribution falls back to the original distributions; while for larger $\beta$, both the positive and negative distributions become smoother and increasingly overlap. This creates a more challenging classification task at higher noise levels, which acts as a powerful regularizer for the contrastive learning process. The final optimization objective, which we term \emph{ranking-based perturbed NCE (RP-NCE)}, is an average of the NCE losses across all noise levels:
\begin{equation}\label{eq:rp_nce}
    \begin{aligned}
        \ell_{\mathrm{RP-NCE}}(\theta)&=\frac 1{MN}\sum_{m=1}^M\sum_{n=1}^N\log\frac{\exp(\varphib_\theta(s_n,a_n)^\top\nub_\theta(\stil'_{n,0};\beta_m))}{\sum_{k=1}^K\exp(\varphib_\theta(s_n,a_n)^\top\nub_\theta(\stil'_{n,k};\beta_m))},
    \end{aligned}
\end{equation}
where $\stil'_{n,0}$ and $\stil'_{n,k}$ are sampled by perturbing the positive sample $s'_{n,0}$ and negative samples $s'_{n,k}$ with the Gaussian distribution $\PP(\cdot|s';\beta_k)$, respectively.  

\begin{figure}[t]
    \centering
    \includegraphics[width=1.0\linewidth]{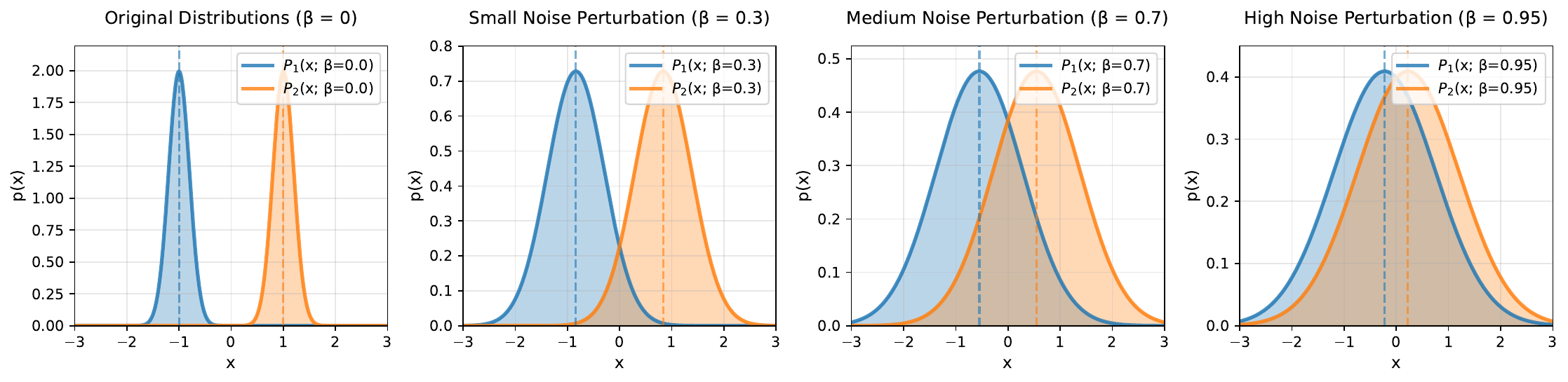}
    \caption{Illustrations of two nearly disjoint Gaussian distributions and their perturbed versions under various configurations of $\beta$. As $\beta$ progresses from $0$ to $1$, they overlap with each other and it becomes increasingly difficult for NCE to tell the difference between these distributions.}
    \label{fig:nce_perturbed_distributions}
\end{figure}

\paragraph{Remark (NCE for Linear Spectral Representations)} \ctrl \citep{zhang2022making} leverages the NCE framework to derive linear spectral representations. It formulates the target density as: 
\begin{equation}
\begin{aligned}\label{eq:ncel}
    &\quad\PP(s'|s, a) \propto \widetilde{\PP}(s')\varphib_\theta(s, a)^\top\nub_\theta(s, a)\\
    &\st\quad \varphib_\theta(s, a)^\top{\nub_\theta(s')} \geq 0, \quad \forall (s, a, s')
\end{aligned}
\end{equation}
The non-negativity constraint is necessary to ensure that $\PP(s'|s, a)$ is a valid probability measure. The corresponding ranking-based perturbed NCE objective is:
\begin{equation}
    \begin{aligned}\label{eq:rp_ncel}
        \ell_{\mathrm{RP-NCEL}}(\theta)&=\frac 1{MN}\sum_{m=1}^M\sum_{n=1}^N\log\frac{\varphib_\theta(s_n,a_n)^\top\nub_\theta(\stil'_{n,0};\beta_m)}{\sum_{k=1}^K\varphib_\theta(s_n,a_n)^\top\nub_\theta(\stil'_{n,k};\beta_m)}.
    \end{aligned}
\end{equation}
For spectral representations, \ctrl uses $\phib_\linear\approx\varphib_{\theta^*}$. In practice, to enforce the non-negativity constraint in \eqref{eq:ncel}, we can activate the inner product in \eqref{eq:rp_ncel} with \texttt{SoftPlus}, a smoothed version of \texttt{ReLU}:
\begin{equation}
    \texttt{SoftPlus}(x) = \log(1+e^x).
\end{equation}
Doing so will inevitably introduce non-linearity; however, we can encourage the model to operate in the linear regime of the \texttt{SoftPlus} function by adding a penalty term to push the inner product $\varphib_\theta(\cdot, \cdot)^\top\nub_\theta(\cdot)$ toward large positive values, such that $\texttt{SoftPlus}(x)\approx x$ and the linearity is preserved.

\section{Reinforcement Learning with Spectral Representations}\label{sec:rl_with_spectral_rep}

\begin{algorithm}[t]
\caption{Online Reinforcement Learning with Spectral Representations}
\label{alg}
\textbf{Initialize}: replay buffer $\Dcal=\emptyset$, policy $\pi_\psi$, representation networks $\varphib_\theta$ and $\nub_\theta$, reward network $r_{\theta, \xi_r}$, and $Q$-value functions $Q_{\theta, \xi_1}$ and $Q_{\theta, \xi_2}$ according to the linear formulation \eqref{eq:q_linear}, latent variable formulation \eqref{eq:q_lv} or energy-based formulation \eqref{eq:q_ebm}.

\begin{algorithmic}[1]
\FOR{$t=1, 2, \cdots, T_{\text{total\_steps}}$}
    \STATE $a_t\sim\pi(\cdot|s_t)$
    \STATE $r_t=r(s_t, a_t), s'_t\sim\PP(\cdot|s_t, a_t)$
    \STATE \emph{(Optional)} Compute bonus $b(s_t, a_t)$ 
    \STATE $\Dcal\leftarrow \Dcal\cup\{(s_t, a_t, r_t, s_t')\}$
    \STATE Update $\varphib_\theta$, $\nub_\theta$, and $r_{\theta, \xi_r}$ with spectral contrastive loss \eqref{eq:spectral_contrastive_loss}, variational learning \eqref{eq:bvae_loss}, score matching \eqref{eq:conditional_score_matching}, noise contrastive estimation \eqref{eq:rp_nce} or its linear form \eqref{eq:rp_ncel}, combined with the reward prediction loss \eqref{eq:reward_prediction}
    \STATE Update the value networks $Q_{\theta,\xi_1}$ and $Q_{\theta, \xi_2}$ with \eqref{eq:critic_loss}
    \STATE Update the policy $\pi_\psi$ with \eqref{eq:actor_loss}
\ENDFOR

\textbf{Return} $\pi_\psi$
\end{algorithmic}
\end{algorithm}



As established in Lemma \ref{lemma:q_sufficiency}, after we obtain the spectral representation, $Q$-value functions of any policy $\pi$ can be expressed with one of the spectral representations: either $\phib_\linear$, $\phib_\lv$, or $\phib_\ebm$. This connection inspires us to parameterize the $Q$ functions on top of these representations, as we will detail below. 

For the linear spectral representation where $\phib_\linear\approx\varphib_\theta$, $Q$ functions can be parameterized as a linear function of the features with a weight vector $\xi\in\RR^d$, 
\begin{equation}\label{eq:q_linear}
    Q_{\theta, \xi}( s, a) = \varphib_\theta^\top\xi.
\end{equation}
For the latent variable representation $\phib_\lv$, we can traverse the latent variables if $\Zcal$ is discrete and finite or we can approximate $Q$-functions with Monte-Carlo estimations if $\Zcal$ is continuous:
\begin{equation}\label{eq:q_lv}
    \begin{aligned}
        Q_{\theta, \xi}(s, a)=\inner{\varphi_\theta(\cdot|s, a)}{\xi(\cdot)}_{L_2}=\left\{
        \begin{aligned}
            &\sum_{i=1}^{|\Zcal|}\varphi_\theta(z_i|s, a)\xi(z_i)\quad\quad\quad\quad\quad\quad\quad\ \ \ \text{(discrete)}\\
            &\int_\Zcal \varphi_\theta(z|s, a)\xi(z)\mathrm{d}z\approx \frac 1L\sum_{l=1}^L\xi(z_l)\quad\text{(continuous)}\\
        \end{aligned}\right.
    \end{aligned}
\end{equation}
where $\xi: \Zcal\to\RR$ is a function mapping the latent variable to a scalar value, typically parameterized by a neural network; $\{z_l\}_{l=1}^L$ are Monte-Carlo samples from $\varphi_\theta(\cdot|s, a)$.

For the energy-based representations $\phib_\ebm$, the parameterization is more complex due to its non-linear dependence on $\varphib_\theta$, as defined in \eqref{eq:ebm_spectral_representation}. This representation involves random Fourier features, $\zetab_N(\varphib(s, a))$, which we approximate using sinusoidal activations, i.e., a concatenation of $\sin(W_1^\top\varphib_\theta(s, a))$ and $\cos(W_1^\top\varphib_\theta(s, a))$. To account for the additional non-linearity from the term $\frac{1}{\langle\zetab_N(\varphib(s, a)), \ub\rangle}$, we employ an additional network layer. This results in the following parameterization for the Q-function:
\begin{equation}\label{eq:q_ebm}
    \begin{aligned}
        Q_{\theta, \xi}=\texttt{activ}(W_2\ [\cos(W_1\varphib_\theta(s, a)), \sin(W_1\varphib_\theta(s, a))]^\top)^\top\eta,
    \end{aligned}
\end{equation}
where $\texttt{activ}(\cdot)$ is some nonlinear activation function and $\xi=(W_1, W_2, \eta)$ are learnable parameters optimized by the critic loss. 

To also enforce the linear structure in the reward function \eqref{eq:spectral_mdp}, we parameterize a reward function $r_{\theta, \xi_r}$ using the same structure as $Q$-value functions, and include a reward prediction objective for the representation networks:
\begin{equation}\label{eq:reward_prediction}
    \begin{aligned}
        \ell_{\text{reward}}(\theta, \xi_r)&=\EE_{(s, a, s', r)\in \Dcal}\left[\left(r_{\theta, \xi_r}(s, a) - r\right)^2\right].
    \end{aligned}
\end{equation}

Following standard practice, we apply the \emph{double Q-network trick} \citep{fujimoto2018addressing} to stabilize learning and combat over-estimation bias. Specifically, we maintain two independent set of parameters $\xi_1,\xi_2$ and their exponential moving average (EMA) version $\bar{\theta}_1, \bar{\theta}_2$, and update the $Q$-value functions using the standard TD learning objective:
\begin{equation}\label{eq:critic_loss}
    \ell_{\text{critic}}(\xi_1, \xi_2)=\EE_{(s, a, r, s')\sim\Dcal}\left[\sum_{i\in\{1,2\}}\left(r+\gamma\EE_{a'\sim\pi}\left[\min
    _{j\in\{1,2\}}Q_{\theta,\bar{\xi}_j}(s',a')\right] - Q_{\theta, \xi_i}\right)^2\right].
\end{equation}
The policy is then updated by maximizing the expected Q-value:
\begin{equation}\label{eq:actor_loss}
    \begin{aligned}
        \ell_{\text{policy}}(\pi)=\EE_{s\sim\Dcal, a\sim\pi(\cdot|s)}\left[\min_{i\in\{1,2\}}Q_{\theta, \xi_j}(s, a)\right].
    \end{aligned}
\end{equation}
Algorithm \ref{alg} presents the pseudo-code of online RL algorithms equipped with spectral representations, where the data collection, policy optimization, and representation learning are performed simultaneously. Note that the training of critics and representation networks is decoupled, \ie, the TD learning objective $\ell_{\text{critic}}$ is not used to train $\varphib_\theta$ and $\nub_\theta$. In fact, the framework of spectral representations can be readily integrated into a wide range of reinforcement learning algorithms, including those designed for offline scenarios and visual-input tasks. This showcases the wide applicability of spectral representations. 

\subsection*{Theoretical Analysis}\label{sec:theoretical_analysis}

The optimistic exploration with spectral representation has been justified rigorously in~\citep{ren2023latent,ren2023spectral}, which we briefly introduce here. 
We first make the following assumptions about the candidate class and normalization conditions, which are common among similar analysis.

\begin{assumption}
    Let $|\Pcal|<\infty$ and $\phib\in\Pcal, \mub\in\Pcal$. For $\forall(s, a)\in\Scal\times\Acal$ and $\forall s'\in\Scal$, $\phib(s, a)\in\Hcal_k$ and $\mub(s')\in\Hcal_k$ for some RKHS $\Hcal_k$ with kernel $k$. 
\end{assumption}
Unlike the linear MDP case \citep{jin2020provably,uehara2021representation}, we consider representations in the RKHS $\Hcal_k$, which includes all formulations in \secref{sec:spectral_representations} and provides an abstraction. For example, the energy-based formulation can be recognized as specifying $k$ as the Gaussian kernel \eqref{eq:ebm2}, while the linear formulation can be recovered using a linear kernel. 
\begin{assumption}{(Normalizing Conditions)}\label{asmp:normalizing_conditions}
    Let kernel $k$ be defined on a compact metric space $\Zcal$ with the Lebesgue measure $\mu$ if $\Zcal$ is continuous, and $\int_\Zcal k(z, z)\mathrm{d}z\leq 1$. For $\forall (\phib,\mub)\in\Pcal,\|\phib(s, a)\|_{\Hcal_k}\leq 1$. Besides, $\forall g: \Scal\to\RR$ such that $\|g\|_\infty\leq 1$, we have $\|\int_\Scal \mub(s')g(s')\mathrm{d}s'\|\leq C$.
\end{assumption}
\begin{assumption}{(Eigenvalue Decay Conditions)}\label{asmp:eigendecay}
    For the reproducing kernel, we assume $\mu_i$, the $i$-th eigenvalue of the operator $T_k: L_2(\mu)\rightarrow L_2(\mu)$, $T_kf=\int_\Zcal f(z')k(z, z')\mathrm{d}\mu(z')$, satisfies one of the following conditions:
    \begin{itemize}
        \item $\beta$-finite spectrum: $\mu_i=0$ for $\forall i>\beta$ where $\beta$ is a positive integer;
        \item $\beta$-polynomial decay: $\mu_i\leq C_0i^{-\beta}$, where $C_0$ is an absolute constant and $\beta\geq 1$;
        \item $\beta$-exponential decay: $\mu_i\leq C_1\exp(-C_2i^\beta)$, where $C_1,C_2$ are absolute constants and $\beta>0$.
    \end{itemize}
\end{assumption}
Most common kernels satisfy the above decay conditions. For example, the linear and polynomial kernels satisfy the $\beta$-finite spectrum condition, while the Gaussian kernel satisfies $\beta$-exponential decay. Finally, we present the sample complexity bound for online reinforcement learning. 
\begin{theorem}{(PAC Guarantee for Online Reinforcement Learning)}\label{thm:main}
    Assume the reproducing kernel $k$ satisfies the eigenvalue decay conditions in \asmpref{asmp:eigendecay}. With a proper choice of exploration bonus function $\hat{b}_n(s, a)$, the optimistic version of our algorithm \ref{alg_optimistic} can yield an $\epsilon$-optimal policy with probability at least $1-\delta$ after interacting with the environment for $N=\text{poly}(C, |\Acal|, (1-\gamma)^{-1}, \epsilon, \log(|\Pcal|/\delta))$ episodes. 
\end{theorem}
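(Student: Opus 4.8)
The plan is to follow the optimism-in-the-face-of-uncertainty template developed for low-rank and kernelized MDPs \citep{agarwal2020flambe,uehara2021representation,ren2023latent,ren2023spectral}, instantiated in the RKHS abstraction of \asmpref{asmp:eigendecay}. First I would estimate the transition model by maximum likelihood (or one of its tractable surrogates from \secref{sec:learning_spectral_representations}) over the finite candidate class $\Pcal$ using the data collected through episode $n$. Because $|\Pcal|<\infty$ and the true model is realizable via \eqref{eq:spectral_mdp}, a standard MLE concentration argument (a union bound over $\Pcal$, which simultaneously handles the fact that $\phib$ is being \emph{learned} online) gives, with probability at least $1-\delta$, that the Hellinger distance between the estimated kernel $\widehat{\PP}_n$ and $\PP$ is small in expectation under the visited state-action distribution, at rate $O(\log(|\Pcal|/\delta)/n)$.

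Next I would build the exploration bonus. Using the estimated feature $\widehat{\phib}_n$, form the regularized covariance operator $\widehat{\Sigma}_n = \lambda I + \sum_{i<n}\widehat{\phib}_n(s_i,a_i)\widehat{\phib}_n(s_i,a_i)^\top$ on $\Hcal_k$ (evaluated through the kernel Gram matrix) and set $\hat b_n(s,a) = \min\{\, 2,\ \alpha_n\,\|\widehat{\phib}_n(s,a)\|_{\widehat{\Sigma}_n^{-1}}\,\}$, with confidence radius $\alpha_n$ of order $\sqrt{\log(|\Pcal|N/\delta)}$ up to the information-gain factor. The normalizing conditions of \asmpref{asmp:normalizing_conditions} — in particular $\|\phib(s,a)\|_{\Hcal_k}\le 1$ and $\|\int_\Scal \mub(s')g(s')\mathrm{d}s'\|\le C$ for bounded $g$ — let me bound the one-step prediction error $|\langle \widehat{\PP}_n(\cdot|s,a)-\PP(\cdot|s,a),\, V\rangle|$ for any value function with $\|V\|_\infty\le (1-\gamma)^{-1}$ by $\hat b_n(s,a)$ uniformly, via a self-normalized elliptical concentration inequality in the RKHS. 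A backward induction on the Bellman recursion then shows the bonus-augmented estimate $\widehat V_n$ is optimistic, $\widehat V_n\ge V^*$ pointwise, with high probability; here Lemma \ref{lemma:q_sufficiency} is what makes this work, since it forces every relevant $Q$ to be linear in $\phib$, so the linear-in-$\phib$ bonus genuinely dominates the estimation error.

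I would then control suboptimality with a simulation / value-difference lemma: $V^*-V^{\pi_n}\le \widehat V_n - V^{\pi_n}$ telescopes into the discounted sum of (model error $+$ bonus) along $\pi_n$, hence is at most a constant times $\EE_{\pi_n}[\sum_t \gamma^t \hat b_n(s_t,a_t)]$. Summing over $n=1,\dots,N$ and applying the elliptical-potential (log-determinant) lemma gives $\sum_{n=1}^N \hat b_n(s_n,a_n) = O(\alpha_N\sqrt{N\,\Gamma_N})$, where $\Gamma_N$ is the maximal information gain of $k$ over $N$ points. The trichotomy of \asmpref{asmp:eigendecay} pins down $\Gamma_N$: it is $O(\beta\log N)$ under a $\beta$-finite spectrum, $\widetilde O(N^{1/\beta})$ under $\beta$-polynomial decay, and $O((\log N)^{1+1/\beta})$ under $\beta$-exponential decay — in all cases polylogarithmic or a sublinear polynomial in $N$, with no dependence on the ambient (possibly infinite) dimension. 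Returning a uniformly random policy among $\pi_1,\dots,\pi_N$ converts the averaged regret into expected suboptimality of order $\sqrt{\Gamma_N\log(|\Pcal|N/\delta)/N}$ (times $C$, $|\Acal|$, and horizon factors from the bonus normalization); setting this below $\epsilon$ and solving for $N$ gives the claimed $N=\mathrm{poly}(C,|\Acal|,(1-\gamma)^{-1},\epsilon^{-1},\log(|\Pcal|/\delta))$ — still polynomial in $1/\epsilon$ even in the polynomial-decay regime since $N^{1/\beta}$ is.

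The step I expect to be the main obstacle is the RKHS elliptical concentration behind the bonus: showing that one bonus of the stated width simultaneously dominates both the MLE model error transported through an arbitrary bounded value function and the self-normalized martingale fluctuation, while keeping $\alpha_n$ small enough that the cumulative-bonus bound above remains polynomial. Working with the covariance \emph{operator} rather than a finite matrix — and using \asmpref{asmp:normalizing_conditions} to keep it trace-class so the log-determinant and information-gain quantities are finite — is the delicate part; the remaining work is bookkeeping across the three eigenvalue-decay regimes.
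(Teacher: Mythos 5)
Your proposal is essentially the paper's own route: the paper proves \thmref{thm:main} by invoking Theorem 4 and Appendix E of \citet{ren2023latent} (the Rep-UCB-style argument: MLE concentration with a union bound over the finite class $\Pcal$, an optimism-plus-simulation-lemma reduction paying the $|\Acal|$ factor via uniform-action data collection, and an elliptical-potential/information-gain bound controlled by the eigenvalue decay of $k$), with the only modification being that the bonus is taken in the RKHS inner product --- and your operator bonus $\alpha_n\|\widehat{\phib}_n(s,a)\|_{\widehat{\Sigma}_n^{-1}}$ is exactly the Gram-matrix bonus \eqref{eq:bonus} used in Algorithm \ref{alg_optimistic}. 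Your sketch is consistent with that analysis, so no genuinely different approach or gap to flag.
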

This theorem and its proof largely follow Theorem 4 in \citet{ren2023latent}, except that the bonus functions $\hat{b}_n$ are defined with the RKHS inner product $\inner{\cdot}{\cdot}_{\Hcal_k}$, rather than $\inner{\cdot}{\cdot}_{L_2(\mu)}$. We refer interested readers to Appendix E of \citet{ren2023latent} for a detailed proof.

\section{Spectral Representations in Partially Observable MDPs}\label{sec:pomdps}
In this section, we demonstrate that the framework of spectral representations can be seamlessly extended to partially observable MDPs (POMDPs), leading to the first practical algorithm that can provably solves a subclass of POMDPs. 
Following \citet{efroni2022provable}, a POMDP is defined by the tuple $\langle \Scal, \Acal, \Ocal, \PP, \OO, r, \gamma, d_0\rangle$, where $\Scal, \Acal,$ and $\Ocal$ are the state, action, and observation spaces, $r: \Ocal\times\Acal \to \RR$ is the reward function, $\PP: \Scal\times\Acal\to\Delta(\Scal)$ is the transition probability over the latent states, and $\OO: \Scal\to\Delta(\Ocal)$ is the emission probability that governs the observation an agent will receive given the system state.

By introducing the belief function $b(\cdot):(\Ocal\times\Acal)^t\times\Ocal\to \Delta(\Scal)$ that maps the observation and action history to a distribution over the system states, POMDPs can be converted into equivalent belief state MDPs $\Mcal_b=\langle\Bcal, \Acal, \PP_b, r_b, \gamma, d_{b,0}\rangle$. Here, $\Bcal\subseteq \Delta(\Scal)$ is the space of beliefs, $d_{b,0}(\cdot)=\iint d_0(s_0)\OO(o_0|s_0)b(\cdot|o_0)\mathrm{d}s_0\mathrm{d}o_0$, and
\begin{equation}
    \begin{aligned}
        &\PP_b(b_{t+1}|b_t,a_t)=\\&\quad\iiint \mathbf{1}_{b_{t+1}=b(o_{1:t+1},a_t)}\OO(o_{t+1}|s_{t+1})\PP(s_{t+1}|s_{t}, a_t)b_t(s_t|o_{1:t}, a_{t-1})\mathrm{d}s_t\mathrm{d}s_{t+1}\mathrm{d}o_{t+1}.
    \end{aligned}
\end{equation}
The value functions, $V^\pi(b_t)$ and $Q^\pi(b_t, a_t)$, can then be defined over these belief states. However, the belief state MDPs equivalence does not induce a practical algorithm,
since the belief states are not directly observed and are dependent on the entire history. 
Consequently, to reduce the statistical complexity of learning, we consider a special class of POMDPs with the following structure.
\begin{definition}{($L$-decodability \citep{efroni2022provable})}
    Define
    \begin{equation}
        \begin{aligned}
            x_t&\in\Xcal:= (\Ocal\times\Acal)^{L-1}\times\Ocal, \\
            x_t&= (o_{t-L+1}, a_{t-L+1}, \ldots, o_t),
        \end{aligned}
    \end{equation}
    a POMDP is $L$-decodable if there exists a decoder $p: \Xcal\to\Delta(\Scal)$ such that $p(x_t)=s_t$. 
\end{definition}
Intuitively, $L$-decodability implies that the $L$-step history window serves as a sufficient statistic for the state $s_t$. We note that the $L$-decodability is a plausible assumption that commonly holds in practical decision-making scenarios. For example, in visual RL, although a single frame may be insufficient to capture the full system state (e.g., velocity of the robot), stacking consecutive frames effectively compensates for this partial observability and has become the de facto practice in modern visual RL algorithms \citep{mnih2013playing,yarats2021mastering}.

Most importantly, the $L$-decodability assumption eliminates the dependence on earlier trajectory history in transition probabilities, allowing us to define value functions directly over this windowed history, $Q^\pi(x_t, a_t)$, which satisfy the $L$-step Bellman equation:
\begin{equation}
    \begin{aligned}
        Q^\pi(x_t, a_t) = \EE_{x_{t+1:t+L}\sim\PP^\pi(\cdot|x_t, a_t)}\left[\sum_{i=0}^{L-1}\gamma^ir_{t+i}+\gamma^{L}V^\pi(x_{t+L})\right],
    \end{aligned}
\end{equation}
where $V^\pi(x_{t+L})$ is independent of $(x_t, a_t)$ thanks to the $L$-decodability property. To further eliminate the dependence on $x_t$ in $x_{t+1:t+L-1}$, we employ the \emph{moment matching policy} \citep{efroni2022provable} $\tilde{\pi}$, which conditions solely on $(x_t, a_t)$ but generates the same $L$-step observation distribution as $\pi$ (proof provided in \citet{zhang2023provable}, Appendix C). In this way, if we consider the spectral decomposition of the $L$-step transition and reward:
\begin{equation}
    \begin{aligned}
        \PP^\pi_L(x_{t+L}|x_t, a_t) &= \inner{\phib(x_t, a_t)}{\mub^{\tilde{\pi}}(x_{t+L})}_\Hcal,\\
        r^\pi_L(x_t, a_t)&=\sum_{i=0}^{L-1}\gamma^ir_{t+i}=\inner{\phib(x_t, a_t)}{\thetab^{\tilde{\pi}}_r}_\Hcal,\\
    \end{aligned}
\end{equation}
then it follows that $Q^\pi(x_t, a_t)=\inner{\phib(x_t,a_t)}{\thetab_r^{\tilde{\pi}}+\gamma^L\mub^{\tilde{\pi}}(x_{t+L})}$, making $\phib(x_t, a_t)$ the spectral representation that can sufficiently express the $Q$-value function. Consequently, all three formulations in \secref{sec:linear_nonlinear_spectral_representation} and the representation learning methods in \secref{sec:learning_spectral_representations} extend naturally to such $L$-step transition and rewards. For example, noise contrastive estimation now parameterizes the representation networks $\phib(o_{t-L+1:t}, a_t)$ and $\nub(o_{t+1:t+L})$ using $L$ consecutive frames as input and for contrastive learning. For prediction-based approaches such as variational learning, since predicting all $L$ frames may be computationally expensive, we can also work with predicting every single frame as an approximation. Finally, the RL algorithmic implementations and theoretical characterizations in \secref{sec:rl_with_spectral_rep} also carry over to this setting, yielding a family of practical and theoretically grounded algorithms for solving POMDPs.

\section{Empirical Evaluations}\label{sec:experiments}

\begin{figure}[t]
    \centering
    \includegraphics[width=1.0\linewidth]{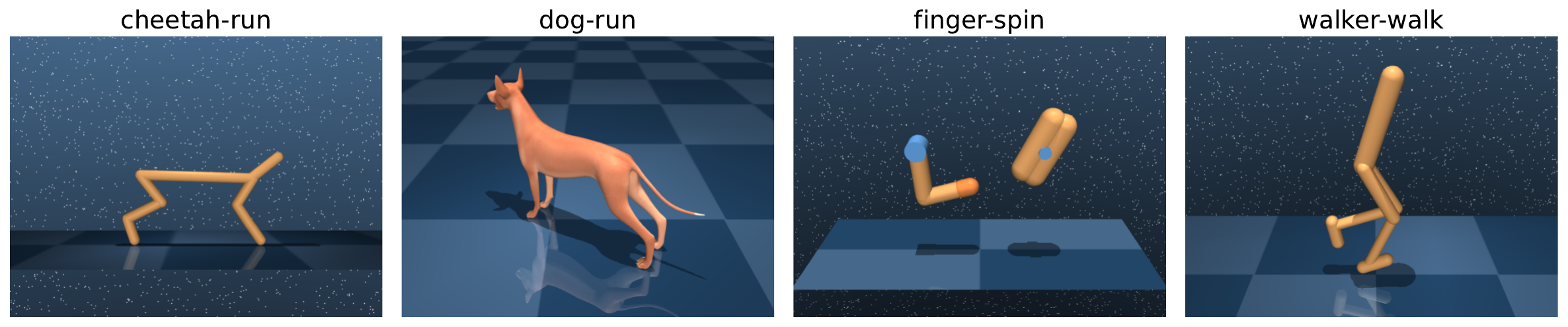}
    \caption{Visual illustrations of four representative tasks from the DMControl Benchmark.}
    \label{fig:dmc_example}
\end{figure}

In this section, we evaluate the effectiveness of spectral representation-based RL algorithms using the widely recognized DeepMind Control (DMControl) Suite \citep{tassa2018deepmind}. Our experiments are designed to answer two primary questions: 1) whether learning spectral representations improves policy optimization, and 2) how different formulations and learning methods for these representations compare against each other and state-of-the-art baselines. To fulfill this, we conduct our evaluation on over 20 distinct tasks from the suite, encompassing both low-dimensional proprioceptive states and high-dimensional visual inputs. Rendered illustrations of these tasks are shown in Figure \ref{fig:dmc_example}. 

\begin{figure}[htbp]
    \centering
    \includegraphics[width=0.95\linewidth]{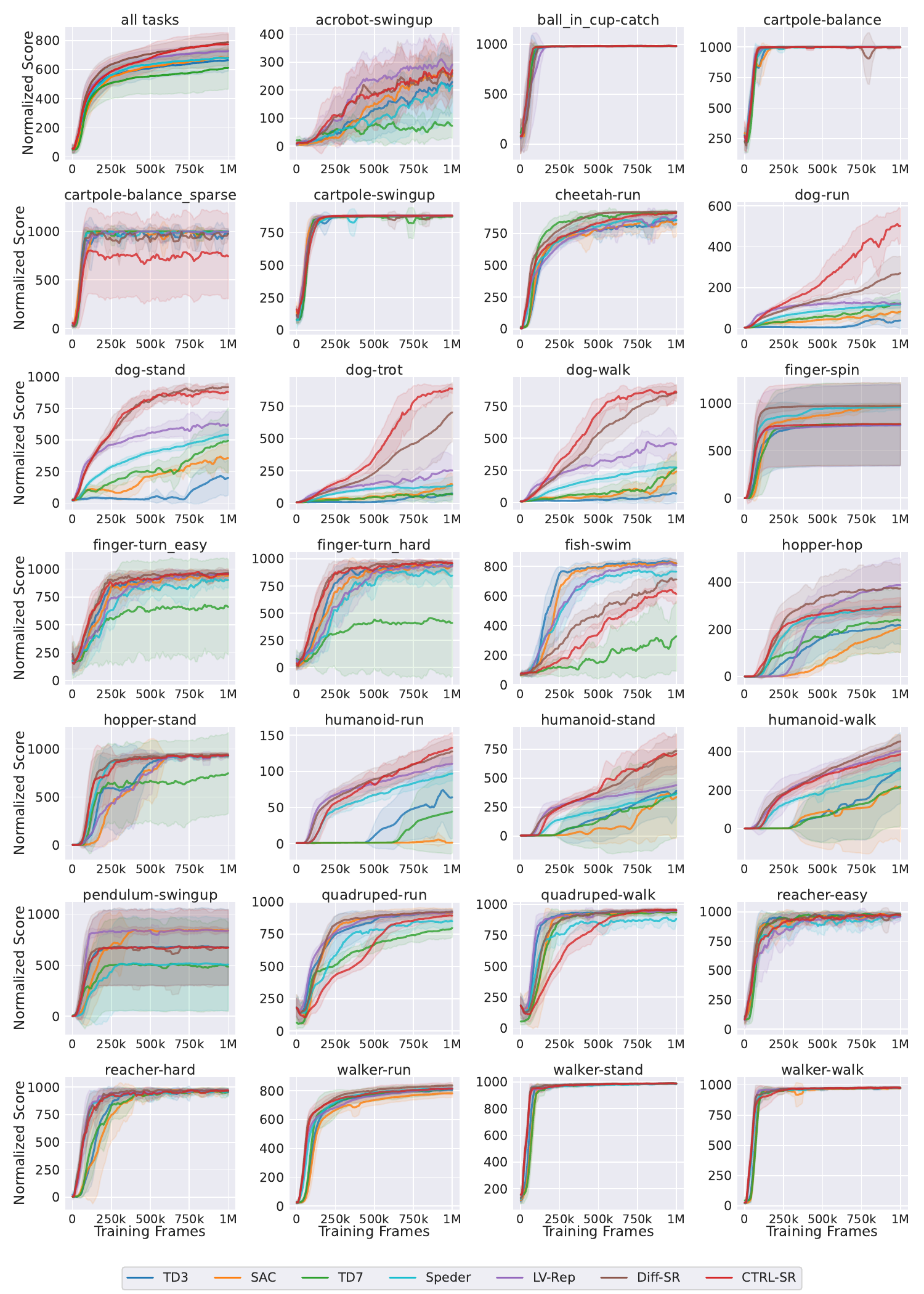}
    \caption{Curves of average episode return on DMControl Suite with proprioceptive inputs, with the first plot depicting the averaged performance across all 27 tasks. All curves are smoothed with a window of 5 for clearer presentation. }
    \label{fig:proprioceptive}
\end{figure}

We implement and compare the following representative instantiations of the spectral representation framework:
\begin{enumerate}
    \item \speder \citep{ren2023spectral}, which employs a linear formulation and learns representations $\phib_\linear$ via spectral contrastive learning \eqref{eq:spectral_contrastive_loss}; 
    \item \lvrep \citep{ren2023latent}, which trains a latent variable spectral representation $\phib_\lv$ through variational learning \eqref{eq:bvae_loss};
    \item \diffsr \citep{shribak2024diffusion}, which utilizes an energy-based formulation $\phib_\ebm$ optimized with score matching \eqref{eq:conditional_score_matching};
    \item \ctrlsr, which improves \ctrl \citep{zhang2022making} by using ranking-based perturbed NCE \eqref{eq:rp_nce} as its representation learning objectives. Note that it also employs the energy-based representations $\phib_\ebm$ instead of the $\phib_\linear$ in \ctrl. 
\end{enumerate}
To ensure a fair and controlled comparison, each method is built upon a fine-tuned implementation of the popular \texttt{TD3} algorithm. We hold all hyperparameters constant across the tasks for each method. This setup isolates the impact of representation learning from that coming from hyperparameter tuning efforts. A detailed description of the hyperparameters and experimental setup is provided in the subsequent sections as well as in Appendix \ref{appsec:implementaion}.

\subsection{Results with Proprioceptive Observations}\label{sec:exp_proprioceptive}

Proprioceptive observations refer to the sensory information that an agent receives about its own internal state, such as the body's position, orientation, and velocity. In DMControl, proprioceptive observations offer a comparatively compact description of the system state, with a dimensionality that scales according to the robot's morphological complexity. For example, the simple \texttt{pendulum-*} task uses a 3-dimensional observation space, comprising the Cartesian coordinates of the pendulum tip and its angular velocity. In contrast, \texttt{dog-*} tasks use a 223-dimensional observation space to encode the complete state of its torso and joints. Collectively, these tasks form a comprehensive evaluation with a wide spectrum of dynamics complexities, allowing for a thorough evaluation of an algorithm's ability to scale to high-dimensional control problems. 

For our baselines, we select two model-free algorithms, \texttt{TD3} \citep{fujimoto2018addressing} and \texttt{SAC} \citep{haarnoja2018soft}. We also include \texttt{TD7} \citep{fujimoto2023sale}, a recent \texttt{TD3} variant that incorporates representation learning, to provide a strong comparison. A consistent training and evaluation protocol is applied to all algorithms and tasks. Each agent interacts with the environment for 1 million frames, with each action being repeated for two consecutive frames (\ie, an action repeat of two frames). The agent's policy is updated every two frames, which amounts to 500,000 gradient steps in total. Performance is evaluated every 10,000 frames by averaging the returns over 10 episodes. To ensure statistical reliability, we report the mean and standard deviation of these evaluation scores across 5 independent runs, each initialized with a different random seed.

Figure \ref{fig:proprioceptive} displays the curves of returns with respect to the environment timesteps, with the first plot providing an aggregated performance summary across all 27 tasks. Although purely model-free algorithms like \texttt{TD3} and \texttt{SAC} can readily solve most tasks, they struggle to scale to environments with high-dimensional tasks. Notably, on complex tasks such as \texttt{dog-trot} and \texttt{humanoid-stand}, they fail to achieve meaningful performances. This outcome highlights a key limitation of model-free RL: its difficulty in leveraging the structural information embedded within the data and in developing temporal abstractions when relying solely on reward signals.

Representation-based methods, with the exception of \texttt{TD7}, consistently outperform their model-free counterpart, \texttt{TD3}. These improvements are particularly pronounced on the more complex $\texttt{dog-*}$ tasks and $\texttt{humanoid-*}$ tasks, where algorithms using spectral representations achieve significant gains. Among the spectral representation algorithms, a clear performance hierarchy emerged. \speder generally performed the poorest, followed by \lvrep, with the two energy-based methods, \diffsr and \ctrlsr, achieving the best results. This is likely due to the inherent limitation of linear spectral representations, as finite-dimensional representations may cause information loss when representing both the dynamics and the value functions. For the latent variable version, although this method is more flexible by using infinite-dimensional representations, the Monte-Carlo approximation used to represent the $Q$-value functions can, however, introduce high variance into training. Finally, the energy-based approach is the most flexible one, since 1) the representations are implicitly infinite-dimensional due to the Gaussian kernel transformation, and 2) it introduces additional learning parameters that are trained by the critic loss, allowing it to offer the best performance among all the methods evaluated.

\begin{figure}[h]
    \centering
    \includegraphics[width=0.9\linewidth]{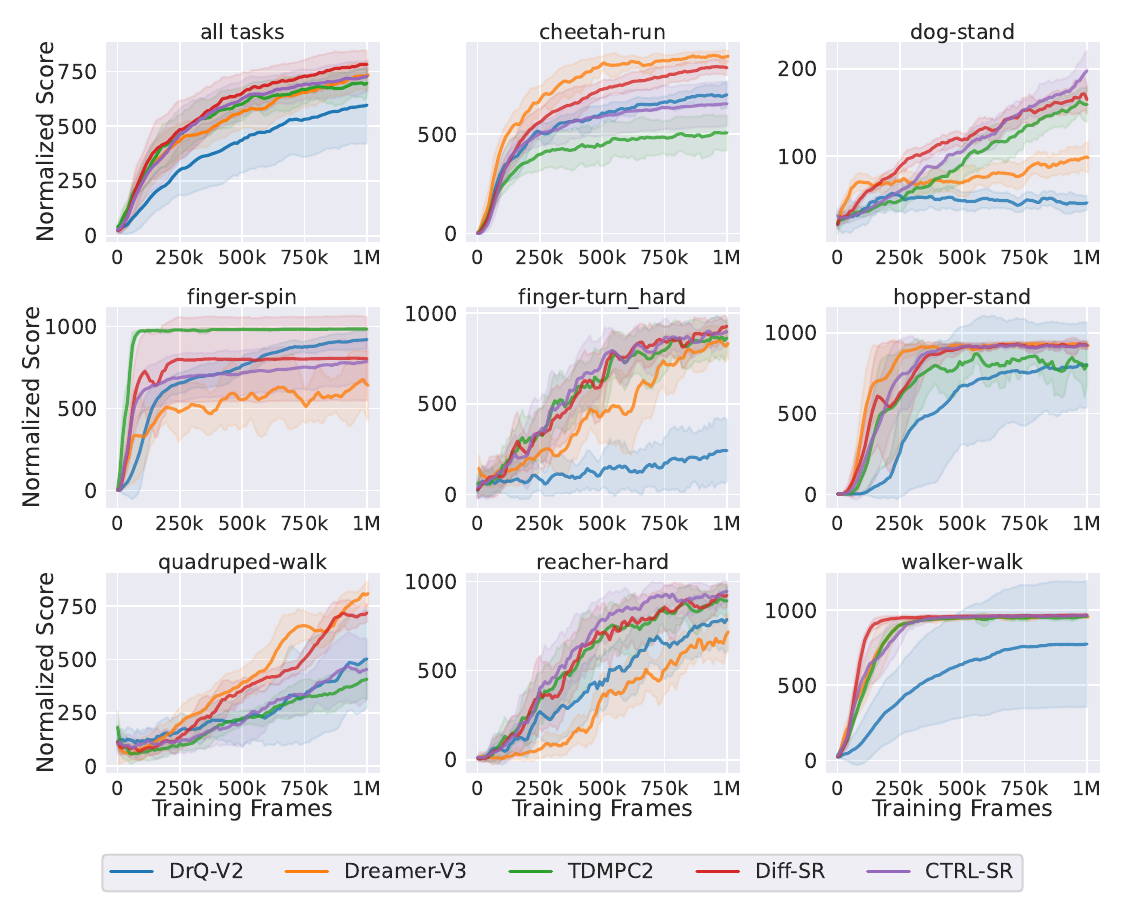}
    \caption{Curves of average episode return on DMControl Suite with visual inputs, with the first plot depicting the averaged performance across all 8 tasks. All curves are smoothed with a window of 5 for clearer presentation.}
    \label{fig:visual}
\end{figure}

\subsection{Results with Visual Observations}
To further demonstrate the benefits of spectral representations, we evaluate our methods on a subset of 8 tasks from the DMControl Suite with visual observations. At each timestep, the agent receives a rendered $84\times 84$ third-person view of its current state instead of receiving the ground-truth proprioceptive state. To compensate for the partial observability inherent in single frames, we follow the standard practice of stacking 3 consecutive frames to form the agent's observation. The combination of high-dimensional visual data and complex dynamics provides a rigorous benchmark for assessing the capabilities of each algorithm. Besides, the action repeat is kept as 2 frames for all algorithms, and the evaluation protocol is the same as that in \secref{sec:exp_proprioceptive}.  

For the baseline algorithms, we include: 1) \texttt{DrQ-V2} \citep{yarats2021mastering}, a competitive model-free visual RL algorithm which employs data augmentation to enhance the visual encoders; 2) \texttt{TDMPC2} \citep{hansen2023td}, a model-based algorithm that performs model predictive control during sampling; and 3) \texttt{Dreamer-V3} \citep{hafner2023mastering}, the state-of-the-art model-based algorithm that employs an RSSM structured model for agent training. 

We implemented the best-performing methods, namely \diffsr and \ctrlsr, for this setting. To handle the visual input, we adopt a convolusional visual encoder to extract a lower-dimensional latent vector from the stacked frames, and the representation networks treat the extracted latent vectors as states. For \ctrlsr, the visual encoder is jointly optimized along with the representation networks, while for \diffsr, we found that joint optimization may lead to representation collapse, a similar phenomenon that is also observed by other representation learning methods \citep{grill2020bootstrap,bardes2021vicreg}. Thus, we decided to train the visual encoder using a variance-regularized reconstruction objective. More details about the implementation can be found in Appendix \ref{appsec:implementaion}.

\begin{wrapfigure}{r}{0.45\textwidth}
  \begin{center}
    \includegraphics[width=1.0\linewidth]{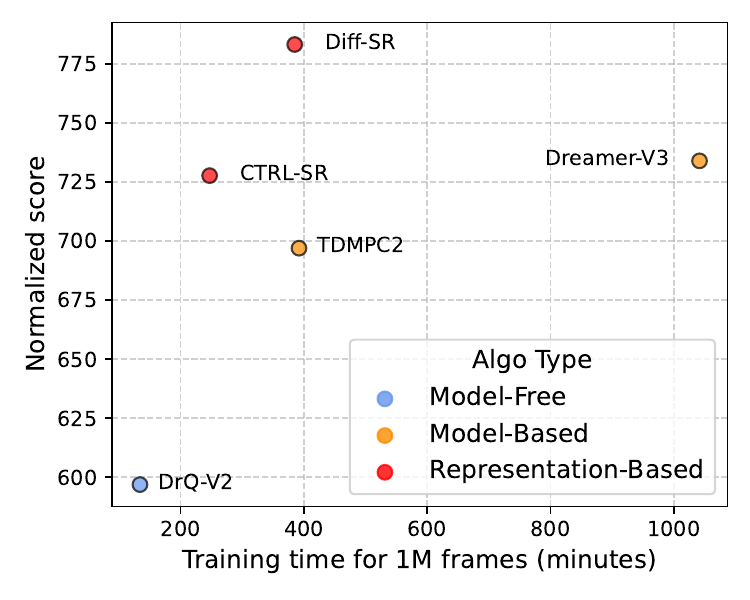}
  \end{center}
  \caption{Runtime comparison on tasks with visual observations.}
    \label{fig:runtime}
\end{wrapfigure}
The learning curves are presented in Figure \ref{fig:visual}, with the first plot presenting an aggregated result across 8 tasks. As a baseline, \texttt{DrQ-V2}, which relies solely on visual data augmentation to regularize the visual encoder, consistently underperforms compared to dynamics-informed methods. In contrast, spectral representation-based methods, despite also being model-free, achieve performance comparable to that of leading model-based algorithms. Notably, they even outperform these approaches on challenging tasks such as \texttt{dog-stand}. Furthermore, Figure \ref{fig:runtime} compares the training time required by each algorithm. Despite utilizing larger network architectures, both \diffsr and \ctrlsr require significantly less training time than model-based competitors, since the representation-based method avoids the costly model-based planning procedure. Collectively, these observations validate that spectral representations provide a powerful and efficient foundation for policy optimization.

\subsection{Ablation Study and Analysis}

\begin{figure}[htbp]
    \centering
    \includegraphics[width=1.0\linewidth]{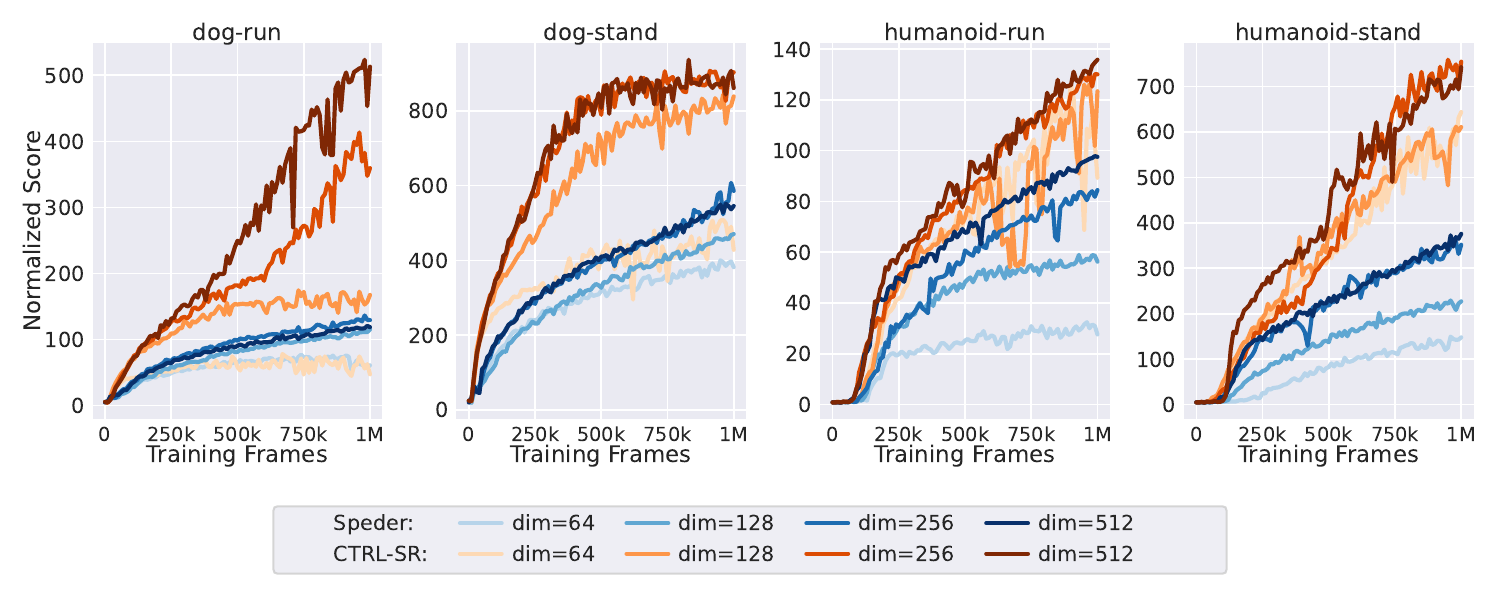}
    \caption{Performance of \speder and \ctrlsr with different representation dimensions. }
    \label{fig:ablation_fdim}
\end{figure}

\paragraph{Representation Dimension.} Despite sharing the linear structure, different formulations in \secref{sec:spectral_representations} demonstrate different scaling behaviors. As shown in figure \ref{fig:ablation_fdim}, we compare \speder and \ctrlsr as representatives for the linear and energy-based formulation, respectively. While both methods improve with a higher representation dimension $d$, the energy-based formulation achieves better final performance under the same capacity and continues to scale up as the dimension increases. We hypothesize this is because energy-based models implicitly learn an infinite-dimensional feature map. While this map is ultimately truncated to a finite dimension using Random Fourier Features, the fact that these frequencies are selected by the critic function better facilitates the downstream policy optimization.

\begin{figure}[htbp]
    \begin{subfigure}[b]{1.0\textwidth}
        \centering
        \includegraphics[width=1.0\textwidth]{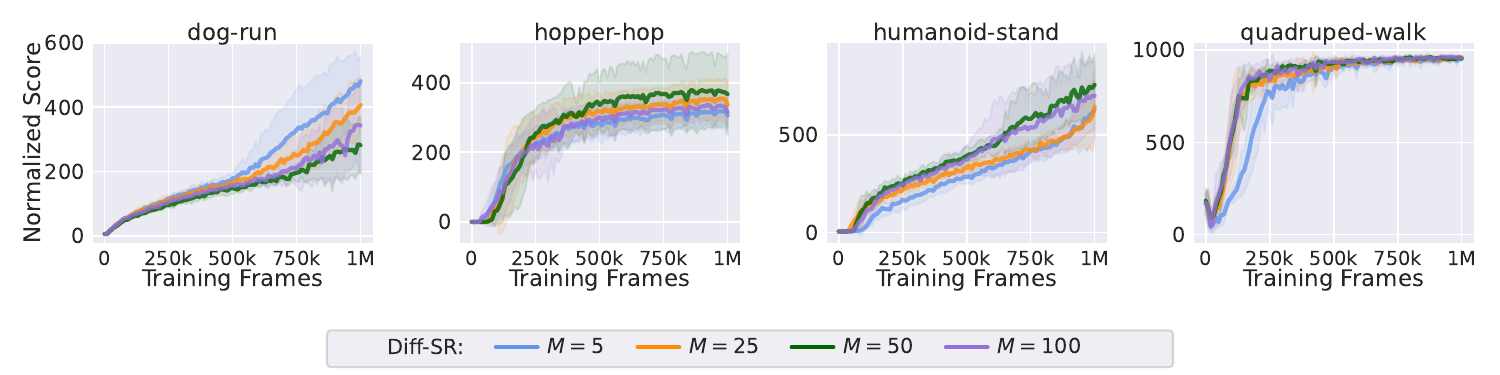}
    \end{subfigure}
    \begin{subfigure}[b]{1.0\textwidth}
        \centering
        \includegraphics[width=1.0\textwidth]{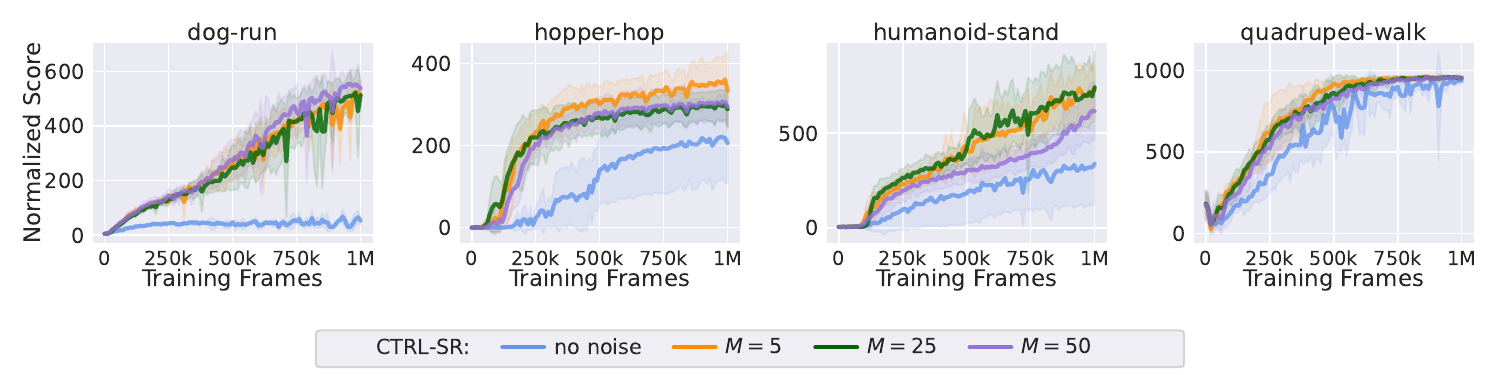}
    \end{subfigure}
    \caption{Performance of \diffsr and  \ctrlsr with different number of noises. }\label{fig:ablation_noise}
\end{figure}

\paragraph{Noise Perturbation. }Both \diffsr and \ctrlsr utilize noise perturbation to improve representation learning, albeit for different reasons. \diffsr trains its network to denoise state transitions, whereas \ctrlsr uses noise to prevent representation collapse due to an overly simple contrastive task. In Figure \ref{fig:ablation_noise}, we investigate how performance changes with varying noise levels. We observe that, generally, there is no significant performance variation across different noise magnitudes. However, for \ctrlsr, performance degrades drastically in the complete absence of noise. This result showcases the efficacy of the noise perturbation technique, particularly for stabilizing the contrastive learning process.

\paragraph{Coupled Training with Critic Objectives.} In our proposed algorithm \ref{alg}, the representation network is trained exclusively using representation learning objectives (line 6). Our theory suggests that representations derived from the transition operator decomposition are sufficient for expressing the Q-value function, thereby eliminating the need to use critic objectives to further tune the network. However, since we are working with finite-dimensional approximations of the spectral representations, it is likely that useful dimensions for representing $Q$-value functions are truncated. Therefore, we also evaluate a variant that couples the critic and representation learning objectives. This approach can be viewed as using the representation objective as a regularization term, while the critic objective is used to train both the representation network and the $Q$-value function. As observed in Figure \ref{fig:ablation_backq}, the variants that combine both objectives demonstrate uniform improvement across all evaluated tasks.

\begin{figure}[t]
    \centering
    \begin{subfigure}[b]{1.0\textwidth}
        \centering
        \includegraphics[width=1.0\textwidth]{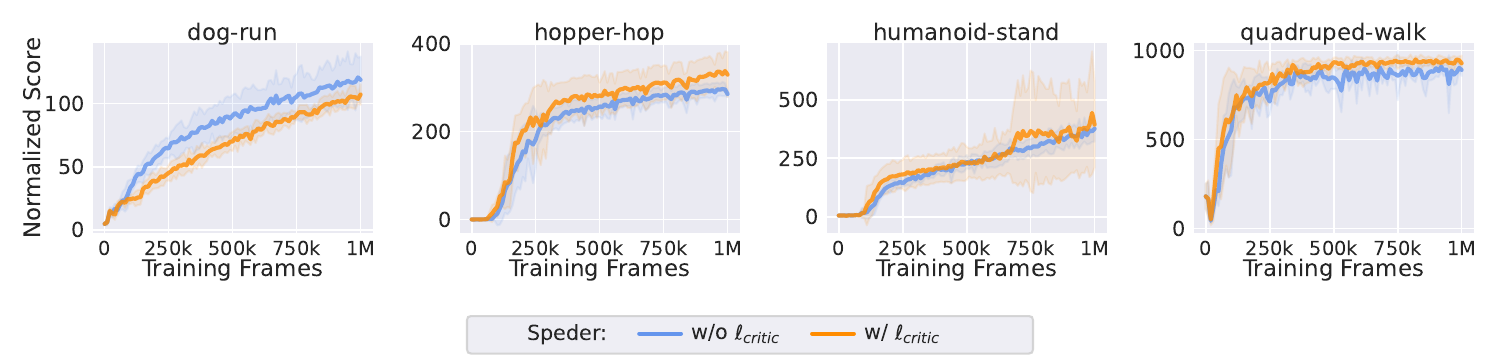}
    \end{subfigure}
    \begin{subfigure}[b]{1.0\textwidth}
        \centering
        \includegraphics[width=1.0\textwidth]{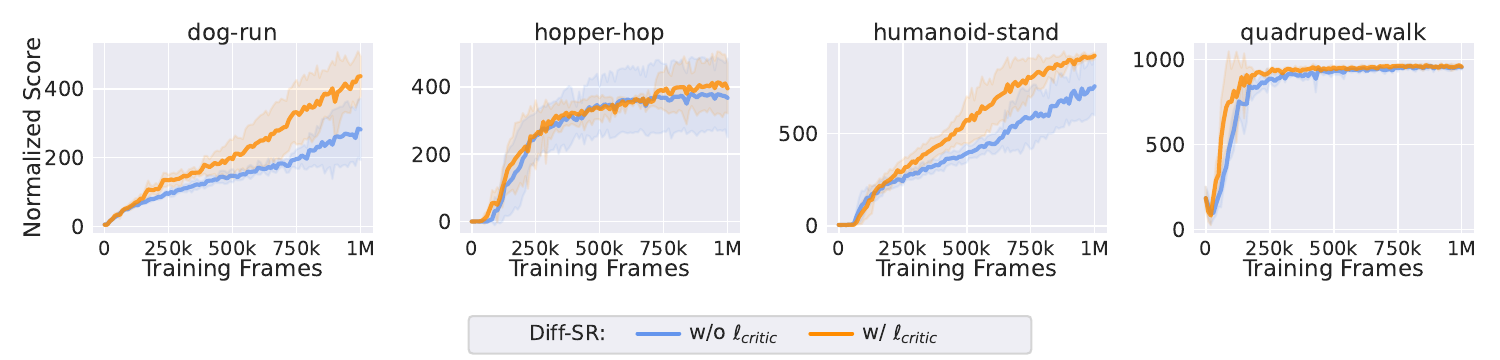}
    \end{subfigure}
    \begin{subfigure}[b]{1.0\textwidth}
        \centering
        \includegraphics[width=1.0\textwidth]{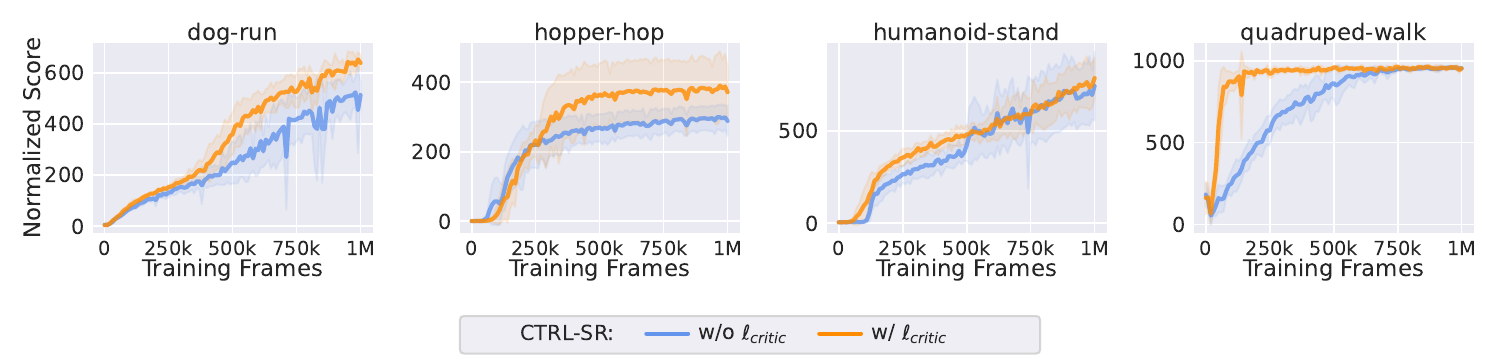}
    \end{subfigure}
    \caption{Performance of \speder, \diffsr, and  \ctrlsr with or without using $\ell_{\text{critic}}$ for representation learning. }\label{fig:ablation_backq}
\end{figure}
\vspace{-2mm}

\section{Related Work}
\subsection{Reinforcement Learning with Representation Learning}

Given the advancements in self-supervised learning (SSL) and the inherent challenges of reinforcement learning (RL) in high-dimensional spaces, there is growing interest in leveraging representation learning to improve RL efficiency. A straightforward application of this approach has been in visual RL, where agents perceive the environment through camera images that often contain redundant information. Many established visual representation techniques, including data augmentation \citep{yarats2021image,yarats2021mastering}, observation reconstruction \citep{yarats2021improving}, and contrastive representation learning \citep{laskin2020curl}, can be readily integrated to derive compact features for the learning. 

In parallel to visual representation learning, a long-standing topic in RL is learning \emph{temporal} representations that encode system behavior and facilitate long-term planning. Among them, successor features \citep{dayan1993improving, kulkarni2016deep,barreto2017successor} and forward-backward representations \citep{touati2021learning} learn to summarize the long-term consequences of actions and can potentially adapt to different reward functions by applying the learned representations. Methods based on bisimulation metrics \citep{ferns2004metrics, gelada2019deepmdp,castro2020scalable,liao2023policy} quantify the behavioral similarity between states and learn representations that group equivalent states together. Inspired by model-based approaches, a large body of work \citep{munk2016learning,lee2020stochastic,guo2020bootstrap,schwarzer2020data,mcinroe2021learning,yu2022mask,tang2023understanding,kim2025self,fujimoto2023sale,fujimoto2025towards,assran2025v} learns representations by predicting future system states, either directly from observations or in a latent space. The spectral representation framework \citep{ren2022free,ren2023latent,zhang2022making,ren2023spectral,shribak2024diffusion} addressed in this paper is closely related to these methods, especially \citet{fujimoto2025towards}, which learn a linear, self-predictive latent representation of states and actions and build the Q-value function upon them. To see this connection, the second term of the spectral contrastive loss in \eqref{eq:spectral_contrastive_loss} is equivalent to predicting the representation of the next state under deterministic MDPs and certain normalization conditions on the representations. The variational learning objective in \eqref{eq:bvae_loss} can also be cast as a non-linear extension of the latent representation prediction problem. Finally, the contrastive learning objective used by \ctrlsr has also been employed to extract temporal abstractions of the system by distinguishing ground-truth state transitions from faked ones \citep{oord2018representation,anand2019unsupervised,stooke2021decoupling,zheng2023texttt}. 

While the learning objectives share many similarities, spectral representations go beyond merely capturing the predictive structure. By exploiting the spectrum structure of the dynamics, spectral representations are distinguished from these alternatives by their unique connection to the $Q$-value functions, which we have demonstrated in \thmref{thm:main} is crucial for exploration design and efficient learning.


\subsection{Model-Based Reinforcement Learning}

Model-based reinforcement learning (RL) learns a model of the environment dynamics from observed state transitions and rewards, and then leverages it to facilitate policy optimization \citep{luo2024survey}. One predominant approach treats the approximated model as a surrogate of the true environment, allowing the agent to generate synthetic trajectories through fictitious interaction \citep{chua2018deep,luo2018algorithmic,janner2019trust,yu2020mopo}. In essence, these synthetic trajectories augment the real-world experience by illustrating counterfactuals, depicting what might have happened if different actions were taken, thereby reducing the demand for trial-and-error in the real world. In terms of modeling, various generative models or architectures have been explored, spanning from Gaussian processes \citep{kamthe2018data}, neural network parameterized deterministic or stochastic models \citep{chua2018deep,luo2018algorithmic,hansen2022temporal,hansen2023td,zhou2411dino}, recurrent state space models (RSSMs) \citep{hafner2019learning,hafner2019dream,hafner2020mastering,hafner2023mastering}, transformers \citep{chen2022transdreamer,micheli2022transformers}, energy-based models \citep{chen2024offline}, and diffusion models \citep{ding2024diffusion,alonso2024diffusion}. Beyond the intricate design of the model architecture, the planning process itself also warrants careful design to mitigate the risk of generating unrealistic trajectories. This is because errors in state prediction can accumulate with long planning horizons, a phenomenon known as the \emph{compounding error}. To manage this error, common strategies include branch rollout \citep{janner2019trust}, early truncation \citep{kidambi2020morel}, or reward penalty \citep{yu2020mopo,sun2023model}. In contrast, our proposed methods adopt a similar way to extract dynamics-informed representations from experiences but completely obviate the need for planning, thereby achieving greater computational efficiency.

Apart from learning the dynamics model with task-specific interaction data, there has been a notable trend in adapting foundation models for dynamics prediction. This approach, first proposed as \textit{world models} \citep{ha2018recurrent}, has been revitalized due to the advances in generative models. V-JEPA 2-AC \citep{assran2025v} and DINO-world \citep{baldassarre2025back} adopt similar approaches, by first pre-training visual encoders on large-scale video or image datasets, and subsequently post-training a predictive model with action-annotated datasets for control and planning. To even eliminate the need for such action-annotated datasets, Genie \citep{bruce2024genie} infers latent actions from unlabelled videos and learns a world model modulated by such latent actions jointly. While substantial progress has been made on the visual pre-training, the dynamics modeling component remains comparatively underexplored. Spectral representations help fill this gap by offering a principled and diverse control-oriented perspective that can be paired with existing high-quality visual representations.

\section{Closing Remarks}

In this survey, we systematically review the advances in spectral representations for efficient reinforcement learning. We elucidate the core design choices in this framework, namely the specific formulation of the spectral representations and the choice of learning algorithm. Focusing on the online reinforcement learning setting, we implement representative methods and compare existing methods under a fair protocol. This provides a quantitative comparison of these algorithms both against one another and against standard baselines.

We believe our review and experiments reveal only a fraction of the potential held by spectral representations. Beyond online reinforcement learning, spectral representations provide a natural way to leverage offline \citep{levine2020offline} or passive \citep{ghosh2023reinforcement} interaction data, facilitating sample-efficient online adaptation. Unlike skill-based pre-training that often requires intricate algorithm design and is heavily dependent on the behavior policy, pre-training spectral representations only requires access to transition data, thus possessing superior transferability and generalization~\citep{ren2023spectral,wang2025learning}. Furthermore, spectral representations can be readily extended to various RL scenarios, such as off-policy evaluation \citep{hu2024primal}, multi-agent RL \citep{ren2024scalable}, and goal-conditional RL \citep{eysenbach2022contrastive}, by identifying the spectral structure within the system dynamics and connecting them to the components in RL pipeline. 

Outside the scope of reinforcement learning, spectral representation also casts a novel perspective on modular designs of modern generative models like large language models (LLMs) and diffusion models. As these models are also pre-trained on large-scale, unlabeled datasets, they also yield instrumental representations for downstream tasks like fine-tuning or post-training. We hope this survey can inspire further exploration of this perspective and foster the design of more efficient and robust algorithms.

\section*{Acknowledgements}
This paper is supported by NSF AI institute: 2112085,
NSF ECCS-2401390, NSF ECCS-2401391, ONR
N000142512173, and NSF IIS2403240.

\bibliographystyle{plainnat}
\bibliography{main}

@inproceedings{tang2025deep,
  title={Deep reinforcement learning for robotics: A survey of real-world successes},
  author={Tang, Chen and Abbatematteo, Ben and Hu, Jiaheng and Chandra, Rohan and Mart{\'\i}n-Mart{\'\i}n, Roberto and Stone, Peter},
  booktitle={Proceedings of the AAAI Conference on Artificial Intelligence},
  volume={39},
  number={27},
  pages={28694--28698},
  year={2025}
}

@article{berner2019dota,
  title={Dota 2 with large scale deep reinforcement learning},
  author={Berner, Christopher and Brockman, Greg and Chan, Brooke and Cheung, Vicki and Debiak, Przemys{\l}aw and Dennison, Christy and Farhi, David and Fischer, Quirin and Hashme, Shariq and Hesse, Chris and others},
  journal={arXiv preprint arXiv:1912.06680},
  year={2019}
}

@article{degrave2022magnetic,
  title={Magnetic control of tokamak plasmas through deep reinforcement learning},
  author={Degrave, Jonas and Felici, Federico and Buchli, Jonas and Neunert, Michael and Tracey, Brendan and Carpanese, Francesco and Ewalds, Timo and Hafner, Roland and Abdolmaleki, Abbas and de Las Casas, Diego and others},
  journal={Nature},
  volume={602},
  number={7897},
  pages={414--419},
  year={2022},
  publisher={Nature Publishing Group UK London}
}

@article{ouyang2022training,
  title={Training language models to follow instructions with human feedback},
  author={Ouyang, Long and Wu, Jeffrey and Jiang, Xu and Almeida, Diogo and Wainwright, Carroll and Mishkin, Pamela and Zhang, Chong and Agarwal, Sandhini and Slama, Katarina and Ray, Alex and others},
  journal={Advances in neural information processing systems},
  volume={35},
  pages={27730--27744},
  year={2022}
}

@article{levine2020offline,
  title={Offline reinforcement learning: Tutorial, review, and perspectives on open problems},
  author={Levine, Sergey and Kumar, Aviral and Tucker, George and Fu, Justin},
  journal={arXiv preprint arXiv:2005.01643},
  year={2020}
}

@inproceedings{haarnoja2018soft,
  title={Soft actor-critic: Off-policy maximum entropy deep reinforcement learning with a stochastic actor},
  author={Haarnoja, Tuomas and Zhou, Aurick and Abbeel, Pieter and Levine, Sergey},
  booktitle={International Conference on Machine Learning (ICML)},
  pages={1861--1870},
  year={2018},
  organization={Pmlr}
}

@book{sutton1998reinforcement,
  title={Reinforcement learning: An introduction},
  author={Sutton, Richard S and Barto, Andrew G and others},
  volume={1},
  number={1},
  year={1998},
  publisher={MIT press Cambridge}
}

@article{luo2024survey,
  title={A survey on model-based reinforcement learning},
  author={Luo, Fan-Ming and Xu, Tian and Lai, Hang and Chen, Xiong-Hui and Zhang, Weinan and Yu, Yang},
  journal={Science China Information Sciences},
  volume={67},
  number={2},
  pages={121101},
  year={2024},
  publisher={Springer}
}

@article{yu2020mopo,
  title={Mopo: Model-based offline policy optimization},
  author={Yu, Tianhe and Thomas, Garrett and Yu, Lantao and Ermon, Stefano and Zou, James Y and Levine, Sergey and Finn, Chelsea and Ma, Tengyu},
  journal={Advances in Neural Information Processing Systems (NeurIPS)},
  volume={33},
  pages={14129--14142},
  year={2020}
}

@article{janner2019trust,
  title={When to trust your model: Model-based policy optimization},
  author={Janner, Michael and Fu, Justin and Zhang, Marvin and Levine, Sergey},
  journal={Advances in Neural Information Processing Systems (NeurIPS)},
  volume={32},
  year={2019}
}

@article{chua2018deep,
  title={Deep reinforcement learning in a handful of trials using probabilistic dynamics models},
  author={Chua, Kurtland and Calandra, Roberto and McAllister, Rowan and Levine, Sergey},
  journal={Advances in Neural Information Processing Systems (NeurIPS)},
  volume={31},
  year={2018}
}

@article{luo2018algorithmic,
  title={Algorithmic framework for model-based deep reinforcement learning with theoretical guarantees},
  author={Luo, Yuping and Xu, Huazhe and Li, Yuanzhi and Tian, Yuandong and Darrell, Trevor and Ma, Tengyu},
  journal={arXiv preprint arXiv:1807.03858},
  year={2018}
}

@inproceedings{hafner2019learning,
  title={Learning latent dynamics for planning from pixels},
  author={Hafner, Danijar and Lillicrap, Timothy and Fischer, Ian and Villegas, Ruben and Ha, David and Lee, Honglak and Davidson, James},
  booktitle={International Conference on Machine Learning (ICML)},
  pages={2555--2565},
  year={2019},
  organization={PMLR}
}

@article{hafner2019dream,
  title={Dream to control: Learning behaviors by latent imagination},
  author={Hafner, Danijar and Lillicrap, Timothy and Ba, Jimmy and Norouzi, Mohammad},
  journal={arXiv preprint arXiv:1912.01603},
  year={2019}
}

@article{hafner2020mastering,
  title={Mastering atari with discrete world models},
  author={Hafner, Danijar and Lillicrap, Timothy and Norouzi, Mohammad and Ba, Jimmy},
  journal={arXiv preprint arXiv:2010.02193},
  year={2020}
}

@article{micheli2022transformers,
  title={Transformers are sample-efficient world models},
  author={Micheli, Vincent and Alonso, Eloi and Fleuret, Fran{\c{c}}ois},
  journal={arXiv preprint arXiv:2209.00588},
  year={2022}
}

@article{chen2022transdreamer,
  title={Transdreamer: Reinforcement learning with transformer world models},
  author={Chen, Chang and Wu, Yi-Fu and Yoon, Jaesik and Ahn, Sungjin},
  journal={arXiv preprint arXiv:2202.09481},
  year={2022}
}

@article{ding2024diffusion,
  title={Diffusion world model: Future modeling beyond step-by-step rollout for offline reinforcement learning},
  author={Ding, Zihan and Zhang, Amy and Tian, Yuandong and Zheng, Qinqing},
  journal={arXiv preprint arXiv:2402.03570},
  year={2024}
}

@article{alonso2024diffusion,
  title={Diffusion for world modeling: Visual details matter in atari},
  author={Alonso, Eloi and Jelley, Adam and Micheli, Vincent and Kanervisto, Anssi and Storkey, Amos J and Pearce, Tim and Fleuret, Fran{\c{c}}ois},
  journal={Advances in Neural Information Processing Systems (NeurIPS)},
  volume={37},
  pages={58757--58791},
  year={2024}
}

@article{auer2008near,
  title={Near-optimal regret bounds for reinforcement learning},
  author={Auer, Peter and Jaksch, Thomas and Ortner, Ronald},
  journal={Advances in Neural Information Processing Systems (NeurIPS)},
  volume={21},
  year={2008}
}

@article{dann2015sample,
  title={Sample complexity of episodic fixed-horizon reinforcement learning},
  author={Dann, Christoph and Brunskill, Emma},
  journal={Advances in Neural Information Processing Systems (NeurIPS)},
  volume={28},
  year={2015}
}

@inproceedings{azar2017minimax,
  title={Minimax regret bounds for reinforcement learning},
  author={Azar, Mohammad Gheshlaghi and Osband, Ian and Munos, R{\'e}mi},
  booktitle={International Conference on Machine Learning (ICML)},
  pages={263--272},
  year={2017},
  organization={PMLR}
}

@article{jin2018q,
  title={Is Q-learning provably efficient?},
  author={Jin, Chi and Allen-Zhu, Zeyuan and Bubeck, Sebastien and Jordan, Michael I},
  journal={Advances in Neural Information Processing Systems (NeurIPS)},
  volume={31},
  year={2018}
}

@article{agrawal2017optimistic,
  title={Optimistic posterior sampling for reinforcement learning: worst-case regret bounds},
  author={Agrawal, Shipra and Jia, Randy},
  journal={Advances in Neural Information Processing Systems (NeurIPS)},
  volume={30},
  year={2017}
}

@inproceedings{jin2020provably,
  title={Provably efficient reinforcement learning with linear function approximation},
  author={Jin, Chi and Yang, Zhuoran and Wang, Zhaoran and Jordan, Michael I},
  booktitle={Conference on Learning Theory (CoLT)},
  pages={2137--2143},
  year={2020},
  organization={PMLR}
}

@article{yang2020function,
  title={On function approximation in reinforcement learning: Optimism in the face of large state spaces},
  author={Yang, Zhuoran and Jin, Chi and Wang, Zhaoran and Wang, Mengdi and Jordan, Michael I},
  journal={Advances in Neural Information Processing Systems (NeurIPS)},
  volume={2020},
  year={2020}
}

@article{long2021perturbational,
  title={Perturbational complexity by distribution mismatch: A systematic analysis of reinforcement learning in reproducing kernel hilbert space},
  author={Long, Jihao and Han, Jiequn},
  journal={arXiv preprint arXiv:2111.03469},
  year={2021}
}

@inproceedings{fujimoto2018addressing,
  title={Addressing function approximation error in actor-critic methods},
  author={Fujimoto, Scott and Hoof, Herke and Meger, David},
  booktitle={International Conference on Machine Learning (ICML)},
  pages={1587--1596},
  year={2018},
  organization={PMLR}
}

@inproceedings{yang2024using,   title={Using human feedback to fine-tune diffusion models without any reward model},   author={Yang, Kai and Tao, Jian and Lyu, Jiafei and Ge, Chunjiang and Chen, Jiaxin and Shen, Weihan and Zhu, Xiaolong and Li, Xiu},   booktitle={Proceedings of the IEEE/CVF Conference on Computer Vision and Pattern Recognition (CVPR)},   pages={8941--8951},   year={2024} }

@article{mnih2013playing,
  title={Playing atari with deep reinforcement learning},
  author={Mnih, Volodymyr and Kavukcuoglu, Koray and Silver, David and Graves, Alex and Antonoglou, Ioannis and Wierstra, Daan and Riedmiller, Martin},
  journal={arXiv preprint arXiv:1312.5602},
  year={2013}
}

@article{wang2019benchmarking,
  title={Benchmarking model-based reinforcement learning},
  author={Wang, Tingwu and Bao, Xuchan and Clavera, Ignasi and Hoang, Jerrick and Wen, Yeming and Langlois, Eric and Zhang, Shunshi and Zhang, Guodong and Abbeel, Pieter and Ba, Jimmy},
  journal={arXiv preprint arXiv:1907.02057},
  year={2019}
}

@article{kurutach2018model,
  title={Model-ensemble trust-region policy optimization},
  author={Kurutach, Thanard and Clavera, Ignasi and Duan, Yan and Tamar, Aviv and Abbeel, Pieter},
  journal={arXiv preprint arXiv:1802.10592},
  year={2018}
}

@inproceedings{deisenroth2011pilco,
  title={PILCO: A model-based and data-efficient approach to policy search},
  author={Deisenroth, Marc and Rasmussen, Carl E},
  booktitle={International Conference on machine learning (ICML)},
  pages={465--472},
  year={2011}
}

@article{heess2015learning,
  title={Learning continuous control policies by stochastic value gradients},
  author={Heess, Nicolas and Wayne, Gregory and Silver, David and Lillicrap, Timothy and Erez, Tom and Tassa, Yuval},
  journal={Advances in Neural Information Processing Systems (NeurIPS)},
  volume={28},
  year={2015}
}

@article{tassa2018deepmind,
  title={Deepmind control suite},
  author={Tassa, Yuval and Doron, Yotam and Muldal, Alistair and Erez, Tom and Li, Yazhe and Casas, Diego de Las and Budden, David and Abdolmaleki, Abbas and Merel, Josh and Lefrancq, Andrew and others},
  journal={arXiv preprint arXiv:1801.00690},
  year={2018}
}

@article{uehara2021representation,
  title={Representation learning for online and offline rl in low-rank mdps},
  author={Uehara, Masatoshi and Zhang, Xuezhou and Sun, Wen},
  journal={arXiv preprint arXiv:2110.04652},
  year={2021}
}

@inproceedings{yang2020reinforcement,
  title={Reinforcement learning in feature space: Matrix bandit, kernels, and regret bound},
  author={Yang, Lin and Wang, Mengdi},
  booktitle={International Conference on Machine Learning (ICML)},
  pages={10746--10756},
  year={2020},
  organization={PMLR}
}

@article{agarwal2020flambe,
  title={Flambe: Structural complexity and representation learning of low rank mdps},
  author={Agarwal, Alekh and Kakade, Sham and Krishnamurthy, Akshay and Sun, Wen},
  journal={Advances in Neural Information Processing Systems (NeurIPS)},
  volume={33},
  pages={20095--20107},
  year={2020}
}

@article{ma2018noise,
  title={Noise contrastive estimation and negative sampling for conditional models: Consistency and statistical efficiency},
  author={Ma, Zhuang and Collins, Michael},
  journal={arXiv preprint arXiv:1809.01812},
  year={2018}
}

@article{haochen2021provable,
  title={Provable Guarantees for Self-Supervised Deep Learning with Spectral Contrastive Loss. arXiv e-prints, art},
  author={HaoChen, Jeff Z and Wei, Colin and Gaidon, Adrien and Ma, Tengyu},
  journal={arXiv preprint arXiv:2106.04156},
  year={2021}
}

@article{fujimoto2023sale,
  title={For sale: State-action representation learning for deep reinforcement learning},
  author={Fujimoto, Scott and Chang, Wei-Di and Smith, Edward and Gu, Shixiang Shane and Precup, Doina and Meger, David},
  journal={Advances in neural information processing systems (NeurIPS)},
  volume={36},
  pages={61573--61624},
  year={2023}
}

@incollection{mollenhauer2020singular,
  title={Singular value decomposition of operators on reproducing kernel Hilbert spaces},
  author={Mollenhauer, Mattes and Schuster, Ingmar and Klus, Stefan and Sch{\"u}tte, Christof},
  booktitle={Advances in Dynamics, Optimization and Computation: A volume dedicated to Michael Dellnitz on the occasion of his 60th birthday},
  pages={109--131},
  year={2020},
  publisher={Springer}
}

@article{hafner2023mastering,
  title={Mastering diverse domains through world models, 2024},
  author={Hafner, Danijar and Pasukonis, Jurgis and Ba, Jimmy and Lillicrap, Timothy},
  journal={https://arxiv. org/abs/2301.04104},
  year={2023}
}

@article{rahimi2007random,
  title={Random features for large-scale kernel machines},
  author={Rahimi, Ali and Recht, Benjamin},
  journal={Advances in Neural Information Processing Systems (NeurIPS)},
  volume={20},
  year={2007}
}

@article{kingma2013auto,
  title={Auto-encoding variational bayes},
  author={Kingma, Diederik P and Welling, Max},
  journal={arXiv preprint arXiv:1312.6114},
  year={2013}
}

@inproceedings{higgins2017beta,
  title={beta-vae: Learning basic visual concepts with a constrained variational framework},
  author={Higgins, Irina and Matthey, Loic and Pal, Arka and Burgess, Christopher and Glorot, Xavier and Botvinick, Matthew and Mohamed, Shakir and Lerchner, Alexander},
  booktitle={International Conference on Learning Representations (ICLR)},
  year={2017}
}

@article{lecun2006tutorial,
  title={A tutorial on energy-based learning},
  author={LeCun, Yann and Chopra, Sumit and Hadsell, Raia and Ranzato, M and Huang, Fujie and others},
  journal={Predicting structured data},
  volume={1},
  number={0},
  year={2006}
}

@article{song2021train,
  title={How to train your energy-based models},
  author={Song, Yang and Kingma, Diederik P},
  journal={arXiv preprint arXiv:2101.03288},
  year={2021}
}

@article{ho2020denoising,
  title={Denoising diffusion probabilistic models},
  author={Ho, Jonathan and Jain, Ajay and Abbeel, Pieter},
  journal={Advances in Neural Information Processing Systems (NeurIPS)},
  volume={33},
  pages={6840--6851},
  year={2020}
}

@inproceedings{songscore,
  title={Score-Based Generative Modeling through Stochastic Differential Equations},
  author={Song, Yang and Sohl-Dickstein, Jascha and Kingma, Diederik P and Kumar, Abhishek and Ermon, Stefano and Poole, Ben},
  booktitle={International Conference on Learning Representations (ICLR)}
}

@inproceedings{gutmann2010noise,
  title={Noise-contrastive estimation: A new estimation principle for unnormalized statistical models},
  author={Gutmann, Michael and Hyv{\"a}rinen, Aapo},
  booktitle={International Conference on Artificial Intelligence and Statistics (AISTATS)},
  pages={297--304},
  year={2010},
  organization={JMLR Workshop and Conference Proceedings}
}

@article{gutmann2012noise,
  title={Noise-contrastive estimation of unnormalized statistical models, with applications to natural image statistics},
  author={Gutmann, Michael U and Hyv{\"a}rinen, Aapo},
  journal={The Journal of Machine Learning Research (JMLR)},
  volume={13},
  number={1},
  pages={307--361},
  year={2012},
  publisher={JMLR. org}
}

@article{rhodes2020telescoping,
  title={Telescoping density-ratio estimation},
  author={Rhodes, Benjamin and Xu, Kai and Gutmann, Michael U},
  journal={Advances in Neural Information Processing Systems (NeurIPS)},
  volume={33},
  pages={4905--4916},
  year={2020}
}

@article{robinson2020contrastive,
  title={Contrastive learning with hard negative samples},
  author={Robinson, Joshua and Chuang, Ching-Yao and Sra, Suvrit and Jegelka, Stefanie},
  journal={arXiv preprint arXiv:2010.04592},
  year={2020}
}

@article{chen2022we,
  title={Why do we need large batchsizes in contrastive learning? a gradient-bias perspective},
  author={Chen, Changyou and Zhang, Jianyi and Xu, Yi and Chen, Liqun and Duan, Jiali and Chen, Yiran and Tran, Son and Zeng, Belinda and Chilimbi, Trishul},
  journal={Advances in Neural Information Processing Systems (NeurIPS)},
  volume={35},
  pages={33860--33875},
  year={2022}
}

@inproceedings{radford2021learning,
  title={Learning transferable visual models from natural language supervision},
  author={Radford, Alec and Kim, Jong Wook and Hallacy, Chris and Ramesh, Aditya and Goh, Gabriel and Agarwal, Sandhini and Sastry, Girish and Askell, Amanda and Mishkin, Pamela and Clark, Jack and others},
  booktitle={International Conference on Machine Learning},
  pages={8748--8763},
  year={2021},
  organization={PMLR}
}

@article{finn2016connection,
  title={A connection between generative adversarial networks, inverse reinforcement learning, and energy-based models},
  author={Finn, Chelsea and Christiano, Paul and Abbeel, Pieter and Levine, Sergey},
  journal={arXiv preprint arXiv:1611.03852},
  year={2016}
}

@inproceedings{yarats2021improving,
  title={Improving sample efficiency in model-free reinforcement learning from images},
  author={Yarats, Denis and Zhang, Amy and Kostrikov, Ilya and Amos, Brandon and Pineau, Joelle and Fergus, Rob},
  booktitle={Proceedings of the AAAI Conference on artificial intelligence (AAAI)},
  volume={35},
  number={12},
  pages={10674--10681},
  year={2021}
}

@inproceedings{yarats2021image,
  title={Image augmentation is all you need: Regularizing deep reinforcement learning from pixels},
  author={Yarats, Denis and Kostrikov, Ilya and Fergus, Rob},
  booktitle={International Conference on Learning Representations (ICLR)},
  year={2021}
}

@article{yarats2021mastering,
  title={Mastering visual continuous control: Improved data-augmented reinforcement learning},
  author={Yarats, Denis and Fergus, Rob and Lazaric, Alessandro and Pinto, Lerrel},
  journal={arXiv preprint arXiv:2107.09645},
  year={2021}
}

@inproceedings{laskin2020curl,
  title={Curl: Contrastive unsupervised representations for reinforcement learning},
  author={Laskin, Michael and Srinivas, Aravind and Abbeel, Pieter},
  booktitle={International conference on machine learning},
  pages={5639--5650},
  year={2020},
  organization={PMLR}
}

@article{dayan1993improving,
  title={Improving generalization for temporal difference learning: The successor representation},
  author={Dayan, Peter},
  journal={Neural computation},
  volume={5},
  number={4},
  pages={613--624},
  year={1993},
  publisher={MIT Press}
}

@article{kulkarni2016deep,
  title={Deep successor reinforcement learning},
  author={Kulkarni, Tejas D and Saeedi, Ardavan and Gautam, Simanta and Gershman, Samuel J},
  journal={arXiv preprint arXiv:1606.02396},
  year={2016}
}

@article{barreto2017successor,
  title={Successor features for transfer in reinforcement learning},
  author={Barreto, Andr{\'e} and Dabney, Will and Munos, R{\'e}mi and Hunt, Jonathan J and Schaul, Tom and van Hasselt, Hado P and Silver, David},
  journal={Advances in Neural Information Processing Systems (NeurIPS)},
  volume={30},
  year={2017}
}

@article{touati2021learning,
  title={Learning one representation to optimize all rewards},
  author={Touati, Ahmed and Ollivier, Yann},
  journal={Advances in Neural Information Processing Systems (NeurIPS)},
  volume={34},
  pages={13--23},
  year={2021}
}

@inproceedings{ferns2004metrics,
  title={Metrics for Finite Markov Decision Processes.},
  author={Ferns, Norm and Panangaden, Prakash and Precup, Doina},
  booktitle={UAI},
  volume={4},
  pages={162--169},
  year={2004}
}

@inproceedings{gelada2019deepmdp,
  title={Deepmdp: Learning continuous latent space models for representation learning},
  author={Gelada, Carles and Kumar, Saurabh and Buckman, Jacob and Nachum, Ofir and Bellemare, Marc G},
  booktitle={International Conference on Machine Learning (ICML)},
  pages={2170--2179},
  year={2019},
  organization={PMLR}
}

@inproceedings{liao2023policy,
  title={Policy-independent behavioral metric-based representation for deep reinforcement learning},
  author={Liao, Weijian and Zhang, Zongzhang and Yu, Yang},
  booktitle={Proceedings of the AAAI Conference on Artificial Intelligence (AAAI)},
  volume={37},
  number={7},
  pages={8746--8754},
  year={2023}
}

@inproceedings{castro2020scalable,
  title={Scalable methods for computing state similarity in deterministic markov decision processes},
  author={Castro, Pablo Samuel},
  booktitle={Proceedings of the AAAI Conference on Artificial Intelligence (AAAI)},
  volume={34},
  number={06},
  pages={10069--10076},
  year={2020}
}

@inproceedings{munk2016learning,
  title={Learning state representation for deep actor-critic control},
  author={Munk, Jelle and Kober, Jens and Babu{\v{s}}ka, Robert},
  booktitle={2016 IEEE 55th conference on decision and control (CDC)},
  pages={4667--4673},
  year={2016},
  organization={IEEE}
}

@article{lee2020stochastic,
  title={Stochastic latent actor-critic: Deep reinforcement learning with a latent variable model},
  author={Lee, Alex X and Nagabandi, Anusha and Abbeel, Pieter and Levine, Sergey},
  journal={Advances in Neural Information Processing Systems (NeurIPS)},
  volume={33},
  pages={741--752},
  year={2020}
}

@inproceedings{guo2020bootstrap,
  title={Bootstrap latent-predictive representations for multitask reinforcement learning},
  author={Guo, Zhaohan Daniel and Pires, Bernardo Avila and Piot, Bilal and Grill, Jean-Bastien and Altch{\'e}, Florent and Munos, R{\'e}mi and Azar, Mohammad Gheshlaghi},
  booktitle={International Conference on Machine Learning (ICML)},
  pages={3875--3886},
  year={2020},
  organization={PMLR}
}

@article{schwarzer2020data,
  title={Data-efficient reinforcement learning with self-predictive representations},
  author={Schwarzer, Max and Anand, Ankesh and Goel, Rishab and Hjelm, R Devon and Courville, Aaron and Bachman, Philip},
  journal={arXiv preprint arXiv:2007.05929},
  year={2020}
}

@article{yu2022mask,
  title={Mask-based latent reconstruction for reinforcement learning},
  author={Yu, Tao and Zhang, Zhizheng and Lan, Cuiling and Lu, Yan and Chen, Zhibo},
  journal={Advances in Neural Information Processing Systems (NeurIPS)},
  volume={35},
  pages={25117--25131},
  year={2022}
}

@article{mcinroe2021learning,
  title={Learning temporally-consistent representations for data-efficient reinforcement learning. arXiv preprint},
  author={McInroe, Trevor and Sch{\"a}fer, Lukas and Albrecht, Stefano V},
  journal={arXiv preprint arXiv:2110.04935},
  year={2021}
}

@article{kim2025self,
  title={Self-predictive dynamics for generalization of vision-based reinforcement learning},
  author={Kim, Kyungsoo and Ha, Jeongsoo and Kim, Yusung},
  journal={arXiv preprint arXiv:2506.05418},
  year={2025}
}

@inproceedings{tang2023understanding,
  title={Understanding self-predictive learning for reinforcement learning},
  author={Tang, Yunhao and Guo, Zhaohan Daniel and Richemond, Pierre Harvey and Pires, Bernardo Avila and Chandak, Yash and Munos, R{\'e}mi and Rowland, Mark and Azar, Mohammad Gheshlaghi and Le Lan, Charline and Lyle, Clare and others},
  booktitle={International Conference on Machine Learning (ICML)},
  pages={33632--33656},
  year={2023},
  organization={PMLR}
}

@article{fujimoto2025towards,
  title={Towards general-purpose model-free reinforcement learning, 2025},
  author={Fujimoto, Scott and D’Oro, Pierluca and Zhang, Amy and Tian, Yuandong and Rabbat, Michael},
  journal={URL https://arxiv. org/abs/2501.16142},
  year={2025}
}

@article{oord2018representation,
  title={Representation learning with contrastive predictive coding},
  author={Oord, Aaron van den and Li, Yazhe and Vinyals, Oriol},
  journal={arXiv preprint arXiv:1807.03748},
  year={2018}
}

@article{anand2019unsupervised,
  title={Unsupervised state representation learning in atari},
  author={Anand, Ankesh and Racah, Evan and Ozair, Sherjil and Bengio, Yoshua and C{\^o}t{\'e}, Marc-Alexandre and Hjelm, R Devon},
  journal={Advances in Neural Information Processing Systems (NeurIPS)},
  volume={32},
  year={2019}
}

@inproceedings{stooke2021decoupling,
  title={Decoupling representation learning from reinforcement learning},
  author={Stooke, Adam and Lee, Kimin and Abbeel, Pieter and Laskin, Michael},
  booktitle={International conference on machine learning},
  pages={9870--9879},
  year={2021},
  organization={PMLR}
}

@article{zheng2023texttt,
  title={{TACO}: Temporal Latent Action-Driven Contrastive Loss for Visual Reinforcement Learning},
  author={Zheng, Ruijie and Wang, Xiyao and Sun, Yanchao and Ma, Shuang and Zhao, Jieyu and Xu, Huazhe and Daum{\'e} III, Hal and Huang, Furong},
  journal={Advances in Neural Information Processing Systems (NeurIPS)},
  volume={36},
  pages={48203--48225},
  year={2023}
}

@inproceedings{kamthe2018data,
  title={Data-efficient reinforcement learning with probabilistic model predictive control},
  author={Kamthe, Sanket and Deisenroth, Marc},
  booktitle={International Conference on Artificial Intelligence and Statistics},
  pages={1701--1710},
  year={2018},
  organization={PMLR}
}

@article{kidambi2020morel,
  title={Morel: Model-based offline reinforcement learning},
  author={Kidambi, Rahul and Rajeswaran, Aravind and Netrapalli, Praneeth and Joachims, Thorsten},
  journal={Advances in Neural Information Processing Systems (NeurIPS)},
  volume={33},
  pages={21810--21823},
  year={2020}
}

@inproceedings{sun2023model,
  title={Model-Bellman inconsistency for model-based offline reinforcement learning},
  author={Sun, Yihao and Zhang, Jiaji and Jia, Chengxing and Lin, Haoxin and Ye, Junyin and Yu, Yang},
  booktitle={International Conference on Machine Learning (ICML)},
  pages={33177--33194},
  year={2023},
  organization={PMLR}
}

@inproceedings{chen2024offline,
  title={Offline transition modeling via contrastive energy learning},
  author={Chen, Ruifeng and Jia, Chengxing and Huang, Zefang and Liu, Tian-Shuo and Liu, Xu-Hui and Yu, Yang},
  booktitle={International Conference on Machine Learning (ICML)},
  year={2024}
}

@inproceedings{zhang2022making,
  author       = {Tianjun Zhang and
                  Tongzheng Ren and
                  Mengjiao Yang and
                  Joseph Gonzalez and
                  Dale Schuurmans and
                  Bo Dai},
  title        = {Making Linear MDPs Practical via Contrastive Representation Learning},
  booktitle    = {International Conference on Machine Learning (ICML)},
  series       = {Proceedings of Machine Learning Research},
  volume       = {162},
  pages        = {26447--26466},
  publisher    = {{PMLR}},
  year         = {2022},
}

@inproceedings{shribak2024diffusion,
  author       = {Dmitry Shribak and
                  Chen{-}Xiao Gao and
                  Yitong Li and
                  Chenjun Xiao and
                  Bo Dai},
  title        = {Diffusion Spectral Representation for Reinforcement Learning},
  booktitle    = {Advances in Neural Information Processing Systems (NeurIPS)},
  year         = {2024},
}

@inproceedings{ren2022free,
  title={A free lunch from the noise: Provable and practical exploration for representation learning},
  author={Ren, Tongzheng and Zhang, Tianjun and Szepesv{\'a}ri, Csaba and Dai, Bo},
  booktitle={Uncertainty in Artificial Intelligence (UAI)},
  pages={1686--1696},
  year={2022},
  organization={PMLR}
}

@inproceedings{ren2023spectral,
  author       = {Tongzheng Ren and
                  Tianjun Zhang and
                  Lisa Lee and
                  Joseph E. Gonzalez and
                  Dale Schuurmans and
                  Bo Dai},
  title        = {Spectral Decomposition Representation for Reinforcement Learning},
  booktitle    = {The Eleventh International Conference on Learning Representations (ICLR)},
  publisher    = {OpenReview.net},
  year         = {2023},
}

@inproceedings{ren2023latent,
  author       = {Tongzheng Ren and
                  Chenjun Xiao and
                  Tianjun Zhang and
                  Na Li and
                  Zhaoran Wang and
                  Sujay Sanghavi and
                  Dale Schuurmans and
                  Bo Dai},
  title        = {Latent Variable Representation for Reinforcement Learning},
  booktitle    = {The Eleventh International Conference on Learning Representations (ICLR)},
  publisher    = {OpenReview.net},
  year         = {2023},
}

@inproceedings{zhang2023provable,
  title={Provable Representation with Efficient Planning for Partially Observable Reinforcement Learning},
  author={Zhang, Hongming and Ren, Tongzheng and Xiao, Chenjun and Schuurmans, Dale and Dai, Bo},
  booktitle={International Conference on Machine Learning (ICML)}
}

@article{hansen2023td,
  title={Td-mpc2: Scalable, robust world models for continuous control},
  author={Hansen, Nicklas and Su, Hao and Wang, Xiaolong},
  journal={arXiv preprint arXiv:2310.16828},
  year={2023}
}

@article{hansen2022temporal,
  title={Temporal difference learning for model predictive control},
  author={Hansen, Nicklas and Wang, Xiaolong and Su, Hao},
  journal={arXiv preprint arXiv:2203.04955},
  year={2022}
}

@article{grill2020bootstrap,
  title={Bootstrap your own latent-a new approach to self-supervised learning},
  author={Grill, Jean-Bastien and Strub, Florian and Altch{\'e}, Florent and Tallec, Corentin and Richemond, Pierre and Buchatskaya, Elena and Doersch, Carl and Avila Pires, Bernardo and Guo, Zhaohan and Gheshlaghi Azar, Mohammad and others},
  journal={Advances in neural information processing systems},
  volume={33},
  pages={21271--21284},
  year={2020}
}

@article{bardes2021vicreg,
  title={Vicreg: Variance-invariance-covariance regularization for self-supervised learning},
  author={Bardes, Adrien and Ponce, Jean and LeCun, Yann},
  journal={arXiv preprint arXiv:2105.04906},
  year={2021}
}

@inproceedings{ghosh2023reinforcement,
  title={Reinforcement learning from passive data via latent intentions},
  author={Ghosh, Dibya and Bhateja, Chethan Anand and Levine, Sergey},
  booktitle={International Conference on Machine Learning (ICML)},
  pages={11321--11339},
  year={2023},
  organization={PMLR}
}

@article{hu2024primal,
  title={Primal-Dual Spectral Representation for Off-policy Evaluation},
  author={Hu, Yang and Chen, Tianyi and Li, Na and Wang, Kai and Dai, Bo},
  journal={arXiv preprint arXiv:2410.17538},
  year={2024}
}

@article{ren2024scalable,
  title={Scalable spectral representations for multi-agent reinforcement learning in network MDPs},
  author={Ren, Zhaolin and Zhang, Runyu and Dai, Bo and Li, Na},
  journal={arXiv preprint arXiv:2410.17221},
  year={2024}
}

@inproceedings{efroni2022provable,
  title={Provable reinforcement learning with a short-term memory},
  author={Efroni, Yonathan and Jin, Chi and Krishnamurthy, Akshay and Miryoosefi, Sobhan},
  booktitle={International Conference on Machine Learning},
  pages={5832--5850},
  year={2022},
  organization={PMLR}
}

@article{wu2018laplacian,
  title={The laplacian in rl: Learning representations with efficient approximations},
  author={Wu, Yifan and Tucker, George and Nachum, Ofir},
  journal={arXiv preprint arXiv:1810.04586},
  year={2018}
}

@inproceedings{petrik2007analysis,
  title={An Analysis of Laplacian Methods for Value Function Approximation in MDPs.},
  author={Petrik, Marek},
  booktitle={IJCAI},
  pages={2574--2579},
  year={2007}
}

@article{eysenbach2022contrastive,
  title={Contrastive learning as goal-conditioned reinforcement learning},
  author={Eysenbach, Benjamin and Zhang, Tianjun and Levine, Sergey and Salakhutdinov, Russ R},
  journal={Advances in Neural Information Processing Systems (NeurIPS)},
  volume={35},
  pages={35603--35620},
  year={2022}
}

@article{assran2025v,
  title={V-jepa 2: Self-supervised video models enable understanding, prediction and planning},
  author={Assran, Mido and Bardes, Adrien and Fan, David and Garrido, Quentin and Howes, Russell and Muckley, Matthew and Rizvi, Ammar and Roberts, Claire and Sinha, Koustuv and Zholus, Artem and others},
  journal={arXiv preprint arXiv:2506.09985},
  year={2025}
}

@inproceedings{guo2023provably,
  title={Provably efficient representation learning with tractable planning in low-rank pomdp},
  author={Guo, Jiacheng and Li, Zihao and Wang, Huazheng and Wang, Mengdi and Yang, Zhuoran and Zhang, Xuezhou},
  booktitle={International Conference on Machine Learning (ICML)},
  pages={11967--11997},
  year={2023},
  organization={PMLR}
}

@article{fujimoto2020equivalence,
  title={An equivalence between loss functions and non-uniform sampling in experience replay},
  author={Fujimoto, Scott and Meger, David and Precup, Doina},
  journal={Advances in Neural Information Processing Systems (NeurIPS)},
  volume={33},
  pages={14219--14230},
  year={2020}
}

@article{zhou2411dino,
  title={DINO-WM: World Models on Pre-trained Visual Features enable Zero-shot Planning 2024},
  author={Zhou, Gaoyue and Pan, Hengkai and LeCun, Yann and Pinto, Lerrel},
  journal={URL https://arxiv. org/abs/2411},
  volume={4983}
}

@article{baldassarre2025back,
  title={Back to the features: Dino as a foundation for video world models},
  author={Baldassarre, Federico and Szafraniec, Marc and Terver, Basile and Khalidov, Vasil and Massa, Francisco and LeCun, Yann and Labatut, Patrick and Seitzer, Maximilian and Bojanowski, Piotr},
  journal={arXiv preprint arXiv:2507.19468},
  year={2025}
}

@article{ha2018recurrent,
  title={Recurrent world models facilitate policy evolution},
  author={Ha, David and Schmidhuber, J{\"u}rgen},
  journal={Advances in Neural Information Processing Systems (NeurIPS)},
  volume={31},
  year={2018}
}

@inproceedings{bruce2024genie,
  title={Genie: Generative interactive environments},
  author={Bruce, Jake and Dennis, Michael D and Edwards, Ashley and Parker-Holder, Jack and Shi, Yuge and Hughes, Edward and Lai, Matthew and Mavalankar, Aditi and Steigerwald, Richie and Apps, Chris and others},
  booktitle={International Conference on Machine Learning (ICML)},
  year={2024}
}

@inproceedings{dai2018sbeed,
  title={SBEED: Convergent reinforcement learning with nonlinear function approximation},
  author={Dai, Bo and Shaw, Albert and Li, Lihong and Xiao, Lin and He, Niao and Liu, Zhen and Chen, Jianshu and Song, Le},
  booktitle={International conference on machine learning},
  pages={1125--1134},
  year={2018},
  organization={PMLR}
}

@article{nauman2024bigger,
  title={Bigger, regularized, optimistic: scaling for compute and sample efficient continuous control},
  author={Nauman, Michal and Ostaszewski, Mateusz and Jankowski, Krzysztof and Mi{\l}o{\'s}, Piotr and Cygan, Marek},
  journal={Advances in Neural Information Processing Systems (NeurIPS)},
  volume={37},
  pages={113038--113071},
  year={2024}
}

@article{lee2024simba,
  title={Simba: Simplicity bias for scaling up parameters in deep reinforcement learning},
  author={Lee, Hojoon and Hwang, Dongyoon and Kim, Donghu and Kim, Hyunseung and Tai, Jun Jet and Subramanian, Kaushik and Wurman, Peter R and Choo, Jaegul and Stone, Peter and Seno, Takuma},
  journal={arXiv preprint arXiv:2410.09754},
  year={2024}
}

@book{khalil2015nonlinear,
  author = {Khalil, Hassan K. and Minor, Mark},
  title = {Nonlinear Control},
  publisher = {Pearson}, 
  year = {2015}
}

@inproceedings{yao2014pseudo,
  title={Pseudo-MDPs and factored linear action models},
  author={Yao, Hengshuai and Szepesv{\'a}ri, Csaba and Pires, Bernardo Avila and Zhang, Xinhua},
  booktitle={2014 IEEE Symposium on Adaptive Dynamic Programming and Reinforcement Learning (ADPRL)},
  pages={1--9},
  year={2014},
  organization={IEEE}
}

@article{wang2025learning,
  title={Learning Task-Agnostic Skill Bases to Uncover Motor Primitives in Animal Behaviors},
  author={Wang, Jiyi and Ke, Jingyang and Dai, Bo and Wu, Anqi},
  journal={arXiv preprint arXiv:2506.15190},
  year={2025}
}

@inproceedings{zheng2022optimistic,
  title={Optimistic exploration with learned features provably solves markov decision processes with neural dynamics},
  author={Zheng, Sirui and Wang, Lingxiao and Qiu, Shuang and Fu, Zuyue and Yang, Zhuoran and Szepesvari, Csaba and Wang, Zhaoran},
  booktitle={International Conference on Learning Representations (ICLR)},
  year={2022}
}

@article{xiang2025pan,
  title={PAN: A World Model for General, Interactable, and Long-Horizon World Simulation},
  author={Xiang, Jiannan and Gu, Yi and Liu, Zihan and Feng, Zeyu and Gao, Qiyue and Hu, Yiyan and Huang, Benhao and Liu, Guangyi and Yang, Yichi and Zhou, Kun and others},
  journal={arXiv preprint arXiv:2511.09057},
  year={2025}
}

@article{ma2025offline,
  title={Offline Imitation Learning upon Arbitrary Demonstrations by Pre-Training Dynamics Representations},
  author={Ma, Haitong and Dai, Bo and Ren, Zhaolin and Wang, Yebin and Li, Na},
  journal={arXiv preprint arXiv:2508.14383},
  year={2025}
}

@article{ren2025stochastic,
  title={Stochastic Nonlinear Control Via Finite-Dimensional Spectral Dynamics Embedding},
  author={Ren, Zhaolin and Ren, Tongzheng and Ma, Haitong and Li, Na and Dai, Bo},
  journal={IEEE Transactions on Automatic Control},
  year={2025},
  publisher={IEEE}
}

\newpage
\appendix
\section{Online Exploration with Spectral Representations}
Following \citet{guo2023provably}, with the estimation of $\hat{\phib}_n$ at the $n$-th episode, we introduce the bonus term $\hat{b}_n(s, a)$ to implement the principle of optimism in the face of uncertainty. However, since our representation lies in the kernel space $\Hcal_k$, we follow \citet{yang2020function} and define the bonus function in terms of the kernel:
\begin{equation}\label{eq:bonus}
    \hat{b}_n(s, a)=\alpha_n\lambda^{-1/2}\cdot\left(\hat{K}^{(n)}_{(s, a), (s, a)}-\hat{K}^{(n)\top}_{(s,a)}\left(\hat{K}^{(n)}+\lambda I\right)\hat{K}^{(n)}_{(s, a)}\right)^{1/2},
\end{equation}
where
\begin{equation}
    \begin{aligned}
        \hat{K}^{(n)}_{(s, a),(s, a)}&=\langle\hat{\phib}_n(s, a),\hat{\phib}_n(s_i, a_i)\rangle_{\Hcal_k}\in\RR,\\
        \hat{K}^{(n)}_{(s, a)}&=[\langle\hat{\phib}_n(s, a),\hat{\phib}_n(s_i, a_i)\rangle_{\Hcal_k}]_{i\in [|\Dcal_n|]}\in \RR^{|\Dcal_n|},\\
        \hat{K}^{(n)}&=[\langle\hat{\phib}_n(s_i, a_i),\hat{\phib}_n(s_j, a_j)\rangle_{\Hcal_k}]_{i,j\in [|\Dcal_n|]}\in \RR^{|\Dcal_n|\times|\Dcal_n|}.\\
    \end{aligned}
\end{equation}
The bonus function is added to the extrinsic reward and used for planning in the approximated dynamics to learn optimistic $Q$-values that upper-bound the true $Q$-values. The optimistic version of our algorithm, which incorporate such bonus functions, is defined in Algorithm \ref{alg_optimistic}. Note that this algorithm is only leveraged for theoretical analysis, but not for actual implementations.

\begin{algorithm}[h]
\caption{Optimistic Online Reinforcement Learning with Spectral Representations}
\label{alg_optimistic}
\textbf{Initialize}: replay buffer $\Dcal_0=\emptyset$, $\pi_0=\Ucal(\Acal)$

\begin{algorithmic}[1]
\FOR{episode $n=1, 2, \cdots, N$}
    \STATE Collect the transitions $(s_h, a_h, s_{h+1}, r_h)$ following $a_h\in\Ucal(\Acal)$, $s_{h+1}\sim\PP(\cdot|s_h, a_h)$, $r_h=r(s_h, a_h)$
    \STATE $\Dcal_n\leftarrow \Dcal_{n-1}\cup\{(s_h, a_h, s_{h+1}, r_h)\}_{h=1}^H$
    \STATE Learn the spectral representations $\hat{\phib}_n$ and $\hat{\mub}_n$ using $\Dcal_n$
    \STATE Specify the exploration bonus function $\hat{b}(s, a)$ as \eqref{eq:bonus}
    \STATE Update $\pi_n=\argmax_\pi V^\pi_{\hat{\PP},r+\hat{b}_n}$, where $V^\pi_{\hat{\PP}_n,r+\hat{b}_n}$ is the value function of $\pi$ with approximated dynamics $\hat{\PP}_n=\inner{\hat{\phib}_n}{\hat{\mub}_n}_{\Hcal_k}$ and augmented reward function $r+\hat{b}_n$
\ENDFOR

\textbf{Return} $\pi_1, \pi_2, \ldots, \pi_N$
\end{algorithmic}
\end{algorithm}

\section{Implementation Details}\label{appsec:implementaion}
In this section, we introduce the implementation details of each algorithm in our empirical evaluation section. In \tabref{tab:proprio_common_hyperparam}, we list the common hyperparameters that are shared across most of the algorithms in this paper. For details about each specific algorithm, we will detail below. 

\begin{table}[htbp]
\centering
\caption{Hyperparameters shared across different algorithms. }
\label{tab:proprio_common_hyperparam}
\begin{tabular}{c|cc}
\toprule
\textbf{Parameter} & \textbf{Proprioceptive} & \textbf{Visual}\\
\midrule
frame skip & 2 & 2\\
frame stack & 1 & 3\\
buffer size & 1,000,000 & 1,000,000\\
batch size & 512 & 256\\
train frames & 1,000,000 & 1,000,000\\
warmup frames & 10,000 & 4000\\
discounting factor $\gamma$ & 0.99 & 0.99\\
$n$-step TD & 1 & 3\\

\bottomrule
\end{tabular}
\end{table}

\subsection{Implementation Details of Baseline Algorithms}
\paragraph{Configurations of \texttt{TD3}} \citep{fujimoto2018addressing}. Due to its simplicity and strong empirical performance, we re-implemented \texttt{TD3} in our code repository and built all spectral representation-based algorithms on \texttt{TD3}. The hyperparameters that we used to obtain the results are listed in \tabref{tab:td3_hyperparam}. Note that we tuned certain design choices (such as target network update strategy) and hyperparameters based on our observation on the DMControl suite, and therefore, the final choice may slightly differ from the original implementation. 
\begin{table}[htbp]
\centering
\caption{Hyperparameters specific for \texttt{TD3}.}
\label{tab:td3_hyperparam}
\begin{tabular}{c|c}
\toprule
\textbf{Parameter} & \textbf{Value}\\
\midrule
actor network & \texttt{MLP(dim(S), 512, 512, 512, dim(A))}\\
critic network & \texttt{MLP(dim(S)+dim(A), 512, 512, 512, 1)}\\
learning rate & 0.0003\\
soft update & True\\
soft update rate $\tau$ & 0.005\\
target update interval & 1\\
actor update interval & 1\\
policy noise & 0.2\\
noise clip & 0.3\\
exploration noise & 0.2\\
\bottomrule
\end{tabular}
\end{table}

\paragraph{Configurations of \texttt{SAC}} \citep{haarnoja2018soft}. Compared to \texttt{TD3}, \texttt{SAC} introduces the principle of maximum entropy into reinforcement learning and regularizes the traditional RL objective with entropy. Empirically, entropy regularization encourages the policy to explore the environment, thus potentially leading to improved performance. The hyperparameters of \texttt{SAC} are listed in \tabref{tab:sac_hyperparam}. We enable the automatic tuning of the coefficient of the entropy term and set the target entropy to $-\text{dim}(\Acal)$ by default. 
\begin{table}[htbp]
\centering
\caption{Hyperparameters specific for \texttt{SAC}.}
\label{tab:sac_hyperparam}
\begin{tabular}{c|c}
\toprule
\textbf{Parameter} & \textbf{Value}\\
\midrule
actor network & \texttt{MLP(dim(S), 512, 512, 512, 2dim(A))}\\
critic network & \texttt{MLP(dim(S)+dim(A), 512, 512, 512, 1)}\\
learning rate & 0.0003\\
soft update & True\\
soft update rate $\tau$ & 0.005\\
target update interval & 1\\
auto entropy & True, with target as \texttt{-dim(A)}\\
\bottomrule
\end{tabular}
\end{table}

\paragraph{Configurations of \texttt{TD7}} \citep{fujimoto2023sale}. \texttt{TD7} incorporates several improvements over \texttt{TD3}, including representation learning by embedding prediction, 2) policy checkpointing, behavior cloning regularization for offline scenarios, and LAP replay buffer \citep{fujimoto2020equivalence}. Different from how we leverage spectral representations, TD7 feeds the representations alongside with the raw observations and actions into the value networks. We re-implemented the TD7 algorithm for DMControl tasks and used the recommended hyperparameters in the author-provided implementation, which are listed in \tabref{tab:td7_hyperparam}. 
\begin{table}[htbp]
\centering
\caption{Hyperparameters specific for \texttt{TD7}.}
\label{tab:td7_hyperparam}
\begin{tabular}{c|c}
\toprule
\textbf{Parameter} & \textbf{Value}\\
\midrule
embedding dimension & 256\\
hidden dimension & 256\\
learning rate & 0.0003\\
target update interval & 250\\
actor update interval & 1\\
policy noise & 0.2\\
noise clip & 0.5\\
exploration noise & 0.1\\
LAP $\alpha$ & 0.4\\
LAP max priority reset interval & 250\\
\bottomrule
\end{tabular}
\end{table}

\paragraph{Configurations of \texttt{DrQ-V2}} \citep{yarats2021mastering}. \texttt{DrQ-V2} is a model-free reinforcement learning algorithm designed to efficiently solve complex control tasks by directly learning from raw pixel data. The core innovation of \texttt{DrQ-V2} is to apply image augmentation to the raw observations of the agents to regularize the visual encoders, therefore preventing divergent optimization due the challenge of learning from pixels. Although the official implementation of \texttt{DrQ-V2} uses different sets of hyperparameters for varying task difficulties, we used identical hyperparameters for all tasks in our experiments. The hyperparameters are listed in \tabref{tab:drqv2_hyperparam}. 
\begin{table}[htbp]
\centering
\caption{Hyperparameters specific for \texttt{DrQ-V2}.}
\label{tab:drqv2_hyperparam}
\begin{tabular}{c|c}
\toprule
\textbf{Parameter} & \textbf{Value}\\
\midrule
actor hidden dimension & 1024\\
critic hidden dimension & 1024\\
embedding dimension & 50\\
actor std schedule & \texttt{linear(1.0, 0.1, 500000)}\\
actor std clip & 0.3\\
critic loss & Mean Squared Error (MSE) \\ 
learning rate & 0.0001\\
network update interval & 2 \\
network soft update rate & 0.01\\
\bottomrule
\end{tabular}
\end{table}

\paragraph{Configurations of \texttt{Dreamer-V3}} \citep{hafner2023mastering}. \texttt{Dreamer-V3} is a general and highly performant model-based reinforcement learning algorithm. It core concept is to first learn an RSSM that predicts the future latent states based on past observations and actions, thus enabling efficient simulation within the latent space. The agent then learns optimal behaviors through imaginary interaction with this learned simulation, rather than through slow and costly real-world interaction. Such a model-based approach makes it incredibly data-efficient compared to model-free methods. For our experiments, we used existing PyTorch implementation of \texttt{Dreamer-V3} provided in \url{https://github.com/NM512/dreamerv3-torch} and adapted it for our evaluation protocol. 

\paragraph{Configurations of \texttt{TDMPC2}} \citep{hansen2023td}. \texttt{TDMPC2} is another representative model-based reinforcement learning algorithm. Compared to \texttt{Dreamer-V3}, \texttt{TDMPC2} learns a light-weight latent space dynamics model by predicting the representations of the next observations using the representation of current observations and actions. Another innovation of \texttt{TDMPC2} is that, it incorporates model predictive control during sampling to actively explore the environments based on the current estimation of the model. This design significantly improves the undirectional exploration strategy used in common RL algorithms, and boosts sample efficiency. Similarly, we re-used the author provided PyTorch implementation and configurations in \url{https://github.com/nicklashansen/tdmpc2}, and only made modifications to adapt it for our evaluation protocol. 

\subsection{Implementation Details of Spectral Representation-Based Algorithms}
In this section we disclose the details about the spectral representation-based algorithms. We build all these algorithms upon \texttt{TD3}, with the only difference being that the critic networks take the spectral representations as input. Therefore, we will primarily focus on the representation learning part. Besides, since many of our algorithms involve perturbations, we normalize the observation by its mean and standard deviation to keep it on a similar scale of the Gaussian noise. 

In our implementation, we extensively use \texttt{ResidualMLP}, a variant of \texttt{MLP} layers with residual connections, since we find it consistently demonstrates better plasticity and expressiveness, echoing the recent findings by \citet{nauman2024bigger} and \citet{lee2024simba}. A diagram of this architecture is provided in Figure \ref{fig:residual}.

\begin{figure}
    \centering
    \includegraphics[width=0.8\linewidth]{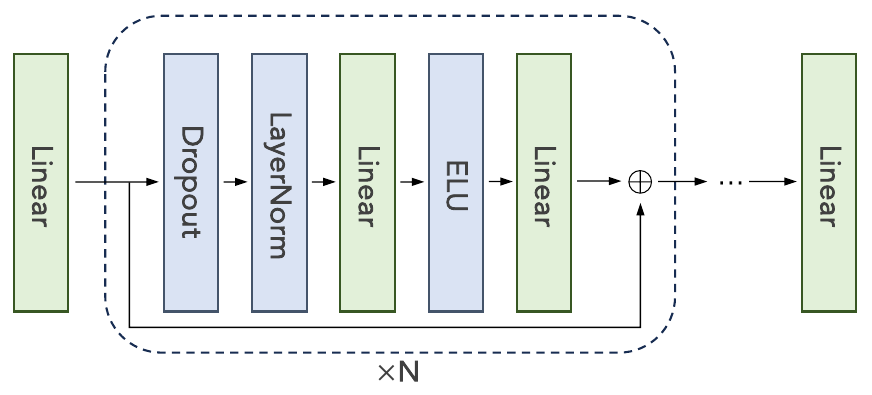}
    \caption{Illustration of the residual MLP network.}
    \label{fig:residual}
\end{figure}

\paragraph{Configurations of \speder}. \texttt{Speder} employs a linear spectral formulation and uses the spectral contrastive learning to optimize the representations. Since in essence it is still performing contrastive learning, we also employ the noise perturbation technique for \texttt{Speder} to regularize the contrastive learning from degeneration. We also use a target network for the representation networks, whose update rate is $0.01$. The configurations of \texttt{Speder} are listed in \tabref{tab:speder_hyperparam}. 
\begin{table}[htbp]
\centering
\caption{Hyperparameters specific for \texttt{Speder}.}
\label{tab:speder_hyperparam}
\begin{tabular}{c|c}
\toprule
\textbf{Parameter} & \textbf{Value}\\
\midrule
representation dimension $d_r$ & 512\\
representation learning rate & 0.000005\\
representation target update rate & 0.01\\
noise embedding dimension $d_n$ & 128\\
architecture of $\varphib$ & \texttt{ResidualMLP(dim(S)+dim(A),512,512,$d_r$)}\\
architecture of $\nub$ & \texttt{ResidualMLP(dim(S)+$d_n$,512,512,$d_r$)}\\
number of noise levels & 25\\
reward prediction loss weight & 0.1\\
\bottomrule
\end{tabular}
\end{table}

\paragraph{Configurations of \lvrep}. \lvrep uses a standard VAE architecture but with a learnable prior distribution. To prevent regularizing the posterior distribution with a bad prior in the initial stage of training, \citet{hafner2020mastering} applies decoupled and asymmetric KL regularization for the posterior and prior. However, in our experiments, we found this provides marginal benefits to the performance, and therefore we a balanced regularization by default. The configurations of \lvrep are listed in \tabref{tab:lvrep_hyperparam}.
\begin{table}[htbp]
\centering
\caption{Hyperparameters specific for \lvrep.}
\label{tab:lvrep_hyperparam}
\begin{tabular}{c|c}
\toprule
\textbf{Parameter} & \textbf{Value}\\
\midrule
representation dimension $d_r$ & 512\\
representation learning rate & 0.0001\\
representation target update rate & 0.005\\
distribution of the latent & diagonal Gaussian\\
architecture of decoder & \texttt{MLP(2*dim(S)+dim(A),512,512,512,$2*d_r$)}\\
architecture of decoder & \texttt{MLP($d_r$,512,512,512,dim(S))}\\
architecture of the prior & \texttt{MLP(dim(S)+dim(A),512,512,$d_r$)}\\
minimum log std & -20\\
maximum log std & 2.0\\
coefficient for KL $\beta$ & 0.1\\
reward prediction loss weight & 0.05\\
\bottomrule
\end{tabular}
\end{table}

\paragraph{Configurations of \diffsr (with proprioceptive observation)}. In \diffsr, we are learning a series of score functions of noise-perturbed distributions and extracting the spectral representations from $\nabla_{\stil'}\log \PP(\stil'|s, a; \beta)=\varphib_\theta(s, a)^\top\nabla_{\stil'}\nub_\theta(\stil'; \beta)$. Instead of parameterizing $\nub_\theta(\tilde{s}'; \beta)$ and taking the second order for optimization, we directly parameterize $\nabla_{\tilde{s}'}\nub_\theta(\tilde{s}'; \beta)$ as a neural network with output dimension $d_r\times\dim(\Scal)$, thus its inner product with $\varphib(s, a)$ gives the score estimation. Finally, the configurations of \diffsr are provided in \tabref{tab:diffsr_hyperparam}. 
\begin{table}[htbp]
\centering
\caption{Hyperparameters specific for \diffsr.}
\label{tab:diffsr_hyperparam}
\begin{tabular}{c|c}
\toprule
\textbf{Parameter} & \textbf{Value}\\
\midrule
representation dimension $d_r$ & 512\\
representation learning rate & 0.0001\\
representation target update rate & 0.005\\
noise embedding dimension $d_n$ & 128\\
random Fourier feature $N$ & 512\\
architecture of $\varphib$ & \texttt{ResidualMLP(dim(S)+dim(A),512,512,512,$d_r$)}\\
architecture of $\nabla\nub$ & \texttt{ResidualMLP(dim(S)+$d_n$,512,512,512,$d_r\times$dim(S))}\\number of noise levels & 50\\
reward prediction loss weight & 0.1\\
\bottomrule
\end{tabular}
\end{table}

\paragraph{Configurations of \ctrlsr (with proprioceptive observation)}. Similar to \diffsr, we conduct noise contrastive learning across different levels of noise perturbations. The configurations of \ctrlsr are listed in \tabref{tab:ctrlsr_hyperparam}.
\begin{table}[htbp]
\centering
\caption{Hyperparameters specific for \ctrlsr.}
\label{tab:ctrlsr_hyperparam}
\begin{tabular}{c|c}
\toprule
\textbf{Parameter} & \textbf{Value}\\
\midrule
representation dimension $d_r$ & 512\\
representation learning rate & 0.0001\\
representation target update rate & 0.005\\
noise embedding dimension $d_n$ & 128\\
random Fourier feature $N$ & 512\\
architecture of $\varphib$ & \texttt{ResidualMLP(dim(S)+dim(A),512,512,$d_r$)}\\
architecture of $\nabla\nub$ & \texttt{ResidualMLP(dim(S)+$d_n$,512,512,$d_r$)}\\number of noise levels & 25\\
reward prediction loss weight & 0.1\\
\bottomrule
\end{tabular}
\end{table}

\paragraph{Adaptation of \ctrlsr and \diffsr for visual observations.} To address the challenge posed by the high dimensionality of visual observations, we employ a hierarchical strategy for both \ctrlsr and \diffsr. Instead of performing contrastive learning or score matching directly in the raw pixel space, we first use a lightweight image encoder $E_\theta$ to project the images into compact latent embeddings, which are then used for representation learning. Due to frame stacking, the final embedding used for representation learning is a concatenation of the latent embeddings of each individual frame. For \ctrlsr, this approach works effectively. However, for \diffsr, it may converge to a trivial constant embedding for all possible $(s, a)$ and $s'$. To prevent such representation collapse, we introduce an additional reconstruction and variance-preserving objective for \diffsr:
\begin{equation}
    \begin{aligned}
        \ell_{\text{recon}}(\theta)&=\mathbb{E}_{o\sim\Dcal}\left[\|D_\theta(E_\theta(o)) - o\|^2\right],\\
        \ell_{\text{vp}}(\theta)&=\mathbb{E}_{\{o_i\}_{i=1}^B\sim\Dcal}\left[\frac 1d\sum_{j=1}^d\max(0, 1-\texttt{std}(\{E_\theta(o_i)_j\}_{i=1}^B))\right],
    \end{aligned}
\end{equation}
where $d_r$ is the reprensetation dimension, $E_\theta$ and $D_\theta$ are the image encoder and decoder, respectively. Furthermore, we incorporate image augmentations following \texttt{DrQ-V2}, which helps the visual encoder develop richer and more robust representations. Specifically, for \ctrlsr, we apply image augmentations to both the observation and next observation during contrastive learning, while for \diffsr, augmentations are applied when training the encoder by feeding augmented images and reconstructing the corresponding clean images. Additional configuration details are provided in \tabref{tab:ctrlsr_obs_hyperparam} and \tabref{tab:diffsr_obs_hyperparam}. 
\begin{table}[htbp]
\centering
\caption{Hyperparameters for \ctrlsr with visual observations.}
\label{tab:ctrlsr_obs_hyperparam}
\begin{tabular}{c|c}
\toprule
\textbf{Parameter} & \textbf{Value}\\
\midrule
actor hidden dimension & 1024\\
critic hidden dimension & 1024\\
actor learning rate & 0.0003\\
critic learning rate & 0.0003\\
number of noise levels & 50\\
representation dimension $d_r$ & 512\\
representation learning rate & 0.0001\\
representation target update rate & 0.0005\\
encoder dimension $d_e$ & 512\\
noise embedding dimension $d_n$ & 128\\
random Fourier feature $N$ & 512\\
architecture of $\varphib$ & \texttt{ResidualMLP($d_e$+dim(A),512,512,$d_r$)}\\
architecture of $\nub$ & \texttt{ResidualMLP($d_e+d_n$,512,512,$d_r$)}\\
reward prediction loss weight & 0.1\\
weight decay & 0.0001\\
\bottomrule
\end{tabular}
\end{table}
\begin{table}[htbp]
\centering
\caption{Hyperparameters for \diffsr with visual observations.}
\label{tab:diffsr_obs_hyperparam}
\begin{tabular}{c|c}
\toprule
\textbf{Parameter} & \textbf{Value}\\
\midrule
actor hidden dimension & 1024\\
critic hidden dimension & 1024\\
actor learning rate & 0.0003\\
critic learning rate & 0.0003\\
number of noise levels & 50\\
representation dimension $d_r$ & 512\\
representation learning rate & 0.0001\\
representation target update rate & 0.01\\
encoder dimension $d_e$ & 256\\
noise embedding dimension $d_n$ & 128\\
random Fourier feature $N$ & 512\\
architecture of $\varphib$ & \texttt{ResidualMLP($d_e$+dim(A),1024x4,$d_r$)}\\
architecture of $\nub$ & \texttt{ResidualMLP($d_e+d_n$,1024x4,$d_r$)}\\
reward prediction loss weight & 10.0\\
weight decay & 0.0001\\
$\ell_{\text{recon}}$ loss weight & 1.0\\
$\ell_{\text{vp}}$ loss weight & 0.1\\
\bottomrule
\end{tabular}
\end{table}

\end{document}